\pgfplotsset{compat=1.3}
\definecolor{Gray}{gray}{0.8}
\def\ksvd{{$K$-SVD }}
\def\kmeans{{$K$-means }}
\newcommand{\argmin}{arg\,min}
\begin{document}
\mainmatter              
\title{Sparse Dictionary Learning for Image Recovery by Iterative Shrinkage}
\titlerunning{Sparse Dictionary Learning for Image Recovery}  
%

\author{Shima Shabani \and Mohammadsadegh Khoshghiaferezaee \and Michael Breuß}
%
\authorrunning{Shima Shabani et al.} 
%

\tocauthor{Shima Shabani, Mohammadsadegh Khoshghiaferezaee, and Michael Breuß}

\institute{Institute for Mathematics, Brandenburg Technical University\\
Platz der Deutschen Einheit 1, 03046 Cottbus, Germany,\\
\email{\{shima.shabani,khoshmoh,breuss\}@b-tu.de}
}

\maketitle              

\begin{abstract}
In this paper we study the sparse coding problem in the context of sparse dictionary learning for image recovery.
To this end, we consider and compare several state-of-the-art sparse optimization methods constructed using the shrinkage operation. 
As the mathematical setting of these methods, we consider an online approach as algorithmical basis together with the basis pursuit denoising problem that arises by the convex optimization approach to the dictionary learning problem.
 
By a dedicated construction of datasets and corresponding dictionaries, we study the effect of enlarging the underlying learning database on reconstruction quality making use of several error measures. 
Our study illuminates that the choice of the optimization method may be practically important in the context of availability of training data. 
In the context of different settings for training data as may be considered part of our study, we illuminate the computational efficiency of the assessed optimization methods.
\keywords{sparse dictionary learning, sparse image recovery, basis pursuit denoising, nonsmooth optimization}
\end{abstract}
\section{Introduction}
Inspired by the sparsity mechanism of the human vision system \cite{OF}, \emph{sparse dictionary learning} (SDL) is a representation learning method that aims to model input data, such as an image, in the form of a linear combination of a few elements, called atoms, that is close to the input. Atoms compose the dictionary, and such a sparse combination occurs in an overcomplete dictionary.
The underlying mechanism of overcompleteness is that a dictionary with significantly more elements than the original dimensionality may enable data representation to be more sparse than if we had just enough elements to span the data space.
Research has demonstrated that learning a desired dictionary by forming atoms from training data leads to state-of-the-art results in many practical applications and yields better performance than by choosing a preconstructed one (e.g., using wavelets). Here, we refer to some applications in denoising \cite{FLSS,GL}, clustering \cite{CSPPC,JNZ}, image deblurring \cite{MMYZ,XMWPZ}, image super-resolution \cite{YWLCH,YWHM}, face recognition \cite{HDRL,JWZDML}, image segmentation \cite{CLJ}, and visual computing \cite{ZL}.

Given a training dataset, the core idea of identifying an overcomplete dictionary best fitting to the data, was studied initially by Field and Olshausen in 1996 \cite{OF,OF1}. They were motivated by an analogy between the atoms of a dictionary and the population of simple cells in the visual cortex. Their research has shown that a neural network equipped with a learning algorithm for sparse reconstruction of natural images develops atoms with properties similar to the receptive fields of simple cells in the primary visual cortex. In the visual cortex, the neurons' receptive fields and activities are comparable to the atoms from the dictionary and their corresponding coefficients for input data, respectively. 

For a given training dataset, SDL methods try to find an adapted dictionary and sparse representations simultaneously. At the technical level, this means to find the approximate decomposition of the training matrix, in terms of its columns as separate training objects, giving the elements of the dictionary as well as the corresponding sparse representation matrix of the training set. This amounts to a minimization problem with two stages: \emph{dictionary updating} and \emph{sparse coding}. Various methods deal with these two stages in different ways \cite{Ela}.  

\subsubsection{Related Work}
Tackling the sparse dictionary learning problem, the method of optimal directions (MOD) \cite{EAH} is one of the first methods. Here, the key idea is solving the minimization problem subject to the limited number of nonzero components for the sparse representation of the data set.
As a general form of the independent component analysis technique related to singular value decomposition (SVD), \cite{LS} proposes a process for learning an overcomplete basis by considering it as a probabilistic model of the observed data. 
The method in \cite{LGBB} learns the sparse overcomplete dictionaries as unions of orthonormal bases. It estimates sparse coefficients based on a variant of block coordinate relaxation algorithm \cite{SBT}, followed by updating the chosen dictionary atom via SVD. The \ksvd method \cite{AEB}, generalization of the \kmeans \cite{GG}, performs SVD at its core to update the atoms of the dictionary one by one like that in \cite{LGBB} and enforces input data encoding by the contribution of a limited number of atoms in a way identical to the MOD approach. 

\ksvd is a batch procedure, accessing the whole training set for the minimization process at each iteration. So, handling a large-sized or dynamic training set like in video sequences requires more effort. To address this issue, \cite{MBPS} presents an online approach that processes one data or a small subset of the training set at a time. That is helpful in the usual content of dictionary learning adapted to small patches of the training data with several millions of these patches. It also presents a dictionary updating stage with low memory consumption and lower computational cost without explicit learning rate tuning compared to the classical batch algorithms. 

Turning to the algorithmic solution of the minimization problems arising in the sparse coding stage, an important design choice in sparse dictionary learning is given by the selection of the optimization method that enables to enforce the sparsity. 
As previous investigations performed in the context of sparse recovery have demonstrated, see e.g. \cite{ESK,MMNCY,SB}, the choice of the optimization method may have profound influence on accuracy of results and required computation time. Both of which may greatly influence the overall efficiency of sparse dictionary learning, depending on dataset size and field of application.

\subsubsection{Our Contribution}
Motivated by the beneficial properties of the online dictionary learning method presented in \cite{MBPS}, we consider algorithmic approaches for sparse coding in that framework. Thereby we focus on image recovery as this is a fundamental problem in the field. 
More concretely, we present a detailed computational study comparing several state-of-the-art optimization methods for that purpose. 
In doing this, we also illuminate the impact of the underlying dataset on the construction of the dictionary.
The methods we consider are technically based on the shrinkage operator, as detailed below, and include classic efficient methods like e.g. {\tt FISTA} \cite{BT} as well as more recent schemes.
The sparse coding problem tackled here takes on the form of \emph{basis pursuit denoising}, compare e.g. \cite{Ela}.

In total, our study highlights important computational aspects like e.g. parameter choices in implementations as well as accuracy properties and efficacy of computed solutions.
In particular, we study how error measures for image recovery develop when increasing the database for SDL. 
We conjecture that our dedicated proceeding and dataset as well as dictionary construction sheds light on properties of the methods and enables a practical choice of methods.

\section{Sparse Dictionary Learning}
Sparse dictionary learning produces an overcomplete dictionary $D\in\mathbb{R}^{m\times n}$ ($m\gg n$) with atoms $\{d_i\}_{i=1}^{n}$ and $d_i\in \mathbb{R}^{m}$, 
to span the data set. That improves the sparsity and flexibility of the input data representation.
The sparse representation of an input like $p\in \mathbb{R}^{m}$ can be exact or approximate regarding $D$. Exact sparse representation $x\in \mathbb{R}^{n}$ is the solution of
\begin{equation}
    \label{eq:spa_def_0}
  \min_{x}~\lVert x \rVert_0  \quad \text{w.r.t.} \quad  p=Dx
\end{equation}
while
\begin{equation}
    \label{eq:spa_def_1}
  \min_{x}~\lVert x \rVert_0  \quad \text{w.r.t.} \quad   \lVert p - Dx \rVert_2\leq \epsilon 
\end{equation}
gives the approximate representation.
Here $\lVert \cdot\rVert_0$ measures the number of the nonzero entries for a vector, $\lVert \cdot \rVert_2$ is the 2-norm, and the known parameter $\epsilon$ is a toleration for the representation error.
Some classical dictionary learning techniques \cite{AEB,OF1} optimize the following empirical problem based on the training database 
$\{p_i\}_{i=1}^{N}$, $p_i\in \mathbb{R}^{m}$ 
\begin{equation}
    \label{eq:base_problem}
  \min_{D,\,x_i} \sum_{i = 1}^{N} \lVert p_i - Dx_i \rVert_2^2  \quad \text{w.r.t.} \quad  \lVert x_i \rVert_0\leq T
\end{equation}
where $T$ controls the number of nonzero elements in the sparse coding stage. In the learning process, (\ref{eq:base_problem}) aims to find the dictionary and proper sparse representations of the training set jointly. 

One should note that $\lVert \cdot\rVert_0$ is an easily grasped notion of sparsity but not necessarily the right notion for empirical work. 
Real world data would rarely be representable by a vector of coefficients containing many zeroes. One popular way to render sparsity more tractable is to relax the highly discontinuous function $\lVert \cdot\rVert_0$, replacing it with a continuous approximation. In the sparse coding stage, \cite{LBRN,MBPS} consider $\lVert \cdot\rVert_1$ instead of $\lVert \cdot\rVert_0$ for the joint minimization problem  
\begin{equation}
    \label{eq:base_problem1}
  \min_{D,\,x_i} \frac{1}{N}\sum_{i = 1}^{N}\Big(\frac{1}{2}\lVert p_i - Dx_i \rVert_2^2+\mu\lVert x_i \rVert_1\Big)
\end{equation}
with regularize parameter $\mu$. The $\ell_1$ penalty still yields a sparse solution, \cite{Ela} gives a geometrical interpretation for that. 

In general, the $\ell_0$-minimization problem like (\ref{eq:base_problem}) is known to be NP-hard. Relaxing $\ell_0$ to $\ell_1$ leads to a computationally much more convenient convex optimization problem. The problem (\ref{eq:base_problem1}) is convex with respect to each variable when the other one is fixed. So, an approach to solving this problem is to alternate between the two variables, minimizing over one while keeping the other fixed. 

It is common to restrict the Euclidean norm of the atoms to prevent $D$ from being arbitrarily large. Otherwise, it leads to small values for $x_i$. Therefore, in (\ref{eq:base_problem1}), we suppose $D\in \mathcal{C}$ which is a convex set
\begin{equation}\label{e:Cdef}
\mathcal{C} = \Big\{D=[d_1, \ldots, d_n]\in\mathbb{R}^{m\times n}|~ d_j\in\mathbb{R}^m,~  
 d_j^Td_j\leq 1, ~j=1, \ldots, n\Big\}.
\end{equation}
Sparse recovery stage with a fixed $D$ in \eqref{eq:base_problem1} is a convex $\ell_2$-$\ell_1$ problem known as \textit{basis pursuit denoising}. One advantage of dealing with this non-smooth convex problem is that there are some practical methods for solving it and speeding up learning algorithms.  

Online dictionary learning approach in \cite{MBPS} uses {\tt LARS} (least angle regression) method for sparse recovery and implements block coordinate descent with warm starts in dictionary updating. The dictionary updating stage is parameter-free, without any learning rate tuning. The authors showed convergence of the proposed method to a stationary point of the cost function with probability one.  
Subsequently, we review a standard sparse recovery method and its acceleration forms implemented in this paper.   


\subsection{Review of a Standard Sparse Recovery Method}

In the sparse recovery stage for \eqref{eq:base_problem1}, the aim is to solve
\begin{equation}\label{e:defprob1}
 \min_{x}~ \underbrace{\frac{1}{2}\|Dx-p\|_2^2}_{f(x)}+\mu\|x\|_1 
\end{equation}
for a fixed $D$. Seeking the minimum of \eqref{e:defprob1}, one baseline is iteratively to generate a sequence $\{x_k\}_{k\geq 0}$ by
\begin{equation}\label{e:iterscheme}
x_{k+1} \, = \, x_k+\alpha_k d_k
\end{equation}
where $d_{k}$ is a descent direction and $\alpha_{k}$ a positive step size. 
As shown in \cite{SB},
\begin{equation}\label{e:dks}
d_k=\mathcal{S}_{\mu\tau_k}\Big(x_k-\tau_k \nabla f(x_k)\Big)-x_k,
\end{equation}
is a descent direction for \eqref{e:defprob1} generated by the nature of $\|.\|_1$. Here, $\tau_k>0$ is a suitable step size for the implicit gradient descent method, chosen in different ways within
some algorithms and
\begin{equation}\label{e:shri}
\mathcal{S}_{\lambda}(x)=\text{sgn}(x)\odot\max\Big\{|x|-\lambda, 0\Big\}
\end{equation}
is the \textit{shrinkage operator} \cite{PB} with factor
$\lambda>0$. In \eqref{e:shri},  $\text{sgn}(\cdot)$ stands for the signum
function and $\odot$ denotes the component-wise product, $(x\odot y)_i=x_iy_i$. This method has a global convergence rate $\mathcal{O}(\frac{1}{k})$ and is called \textit{iterative shrinkage-thresholding algorithm} ({\tt
ISTA})\ \cite{CW,DFM}. It is an attractive baseline method as it is still simple yet results in efficient algorithms.

The shrinkage operator has the following general property related to the minimization problem \eqref{e:defprob1}.
\begin{corollary}\label{cor1}
 $x^*$ is a minimum for \eqref{e:defprob1} if and only if
\begin{equation}\label{e:fpc}
x^*=\mathcal{S}_{\mu\tau_k}\Big(x^*-\tau_k \nabla f(x^*)\Big)
\end{equation}   
\end{corollary}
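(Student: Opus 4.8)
The plan is to read the fixed-point identity \eqref{e:fpc} as the coincidence of two optimality conditions: that of the convex functional $F(x):=f(x)+\mu\|x\|_1$ of \eqref{e:defprob1}, and that of the proximal (shrinkage) subproblem defining $\mathcal{S}_{\mu\tau_k}$. First I would recall that $F$ is convex (as noted after \eqref{eq:base_problem1}) and that $f$ is smooth, so by the very definition of the subdifferential, $x^*$ is a global minimizer of \eqref{e:defprob1} if and only if $0\in\partial F(x^*)$. Since $f$ is finite and differentiable on all of $\mathbb{R}^n$, the Moreau--Rockafellar sum rule gives $\partial F(x^*)=\nabla f(x^*)+\mu\,\partial\|x^*\|_1$, so this first step records the equivalence
\[
x^*\ \text{minimizes}\ \eqref{e:defprob1}\quad\Longleftrightarrow\quad -\nabla f(x^*)\in\mu\,\partial\|x^*\|_1 .
\]

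Second, I would identify the shrinkage operator \eqref{e:shri} as the proximal map of $\mu\tau_k\|\cdot\|_1$, i.e. $\mathcal{S}_{\mu\tau_k}(z)$ is the unique minimizer of $u\mapsto\tfrac12\|u-z\|_2^2+\mu\tau_k\|u\|_1$; this is classical \cite{PB} and, if one prefers, can be checked coordinatewise from \eqref{e:shri} using $\partial|t|=\{\operatorname{sgn}(t)\}$ for $t\neq0$ and $\partial|0|=[-1,1]$, since $\|\cdot\|_1$ separates over coordinates. Writing the first-order condition for this strongly convex inner problem at the point $u=x^*$ with $z=x^*-\tau_k\nabla f(x^*)$ yields
\[
x^* = \mathcal{S}_{\mu\tau_k}\!\big(x^*-\tau_k\nabla f(x^*)\big)\ \Longleftrightarrow\ 0\in x^*-\big(x^*-\tau_k\nabla f(x^*)\big)+\mu\tau_k\,\partial\|x^*\|_1 ,
\]
and the right-hand side collapses to $-\tau_k\nabla f(x^*)\in\mu\tau_k\,\partial\|x^*\|_1$, which upon dividing by $\tau_k>0$ is precisely the condition obtained in the first step. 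Chaining the two equivalences proves the corollary, for any choice of $\tau_k>0$.

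I do not expect a genuine obstacle here; the only points requiring a line of care are the sum rule for the subdifferential (licensed by differentiability of $f$ everywhere) and the prox-characterization of $\mathcal{S}_{\mu\tau_k}$, which I would either cite or dispatch by the elementary three-case computation ($z_i>\lambda$, $z_i<-\lambda$, $|z_i|\le\lambda$). Finally, I would add the natural remark that \eqref{e:fpc} says exactly that the shrinkage-based direction $d_k$ of \eqref{e:dks} vanishes at $x^*$, so the fixed points of the {\tt ISTA} iteration \eqref{e:iterscheme}--\eqref{e:dks} are precisely the minimizers of \eqref{e:defprob1}, which motivates using \eqref{e:fpc} as a stopping/optimality test in the sparse coding stage.
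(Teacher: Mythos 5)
Your proof is correct and complete: the chain ``$x^*$ minimizes $f+\mu\|\cdot\|_1$ $\Leftrightarrow$ $-\nabla f(x^*)\in\mu\,\partial\|x^*\|_1$ $\Leftrightarrow$ $x^*$ is a fixed point of $\mathcal{S}_{\mu\tau_k}(\,\cdot-\tau_k\nabla f(\cdot))$ for any $\tau_k>0$'' is exactly the standard proximal-operator argument. The paper itself gives no proof beyond the citation ``See \cite{PB}'', and your argument is precisely the one found in that reference, so there is nothing to compare beyond noting that you have supplied the details the paper delegates to the citation.
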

\begin{proof}
See \cite{PB}. 
\end{proof}
There are some proposed methods to accelerate {\tt ISTA}, which we implement for the sparse recovery stage of the dictionary learning method and image reconstruction by the produced dictionaries.

\subsection{Review of ISTA Acceleration Methods}\label{Acc}
This section briefly explains some  {\tt ISTA} acceleration algorithms implemented in this paper for the sparse recovery. 

\subsubsection{\tt FPC-BB:}
Based on Corollary \ref{cor1}, fixed-point continuation algorithm \cite{HYZ} uses an operator-splitting technique to solve a
sequence of problem \eqref{e:defprob1} and constructs the shrinkage step-size $\tau_k$ dynamically by the Barzilai-Borwein \cite{BB} technique.

\subsubsection{\tt TwIST:}
Two-step {\tt ISTA} \cite{BF} is a iteration method which relies on computing the next iteration based on two previously computed steps instead of one as
\begin{equation}
   x_{k+1}=(1-\alpha)x_{k-1}+(\alpha-\beta)x_k+\beta\mathcal{S}_{\mu\tau_k}\Big(x_k-\tau_k \nabla f(x_k)\Big) 
\end{equation}
 with $0<\alpha<2$ and $0<\beta<c\alpha$ for a fixed $c$.

\subsubsection{\tt FISTA:}
Fast iterative shrinkage-thresholding algorithm \cite{BT} is employed on a special linear combination of the previous two iterations with gradient Lipschitz constant $L$ as
\begin{equation}
\begin{split}
 x_{k}&=\mathcal{S}_{\frac{\mu}{L}}\Big(y_k-\frac{1}{L}\nabla f(x_k)\Big)\\
  t_{k+1}&=\frac{\sqrt{1+4t_k^2}}{2}\\
  x_{k+1}&=x_{k}+\big(\frac{t_k-1}{t_{k+1}}\big)(x_k-x_{k-1})
\end{split} 
\end{equation}
This method keeps computational simplicity of {\tt ISTA} 
and improves its global convergence rate to $\mathcal{O}(\frac{1}{k^2})$, compared to $\mathcal{O}(\frac{1}{k})$for {\tt ISTA}.

\subsubsection{\tt SpaRSA:}
Sparse reconstruction by separable approximation \cite{WNF} solves \eqref{e:defprob1} by exploring separable computational structures. In each iteration, it solves an optimization subproblem involving a quadratic term with diagonal Hessian (i.e., separable in the unknown) plus a sparse regularizer term.
Improved practical performance of {\tt SpaRSA} is the result of the variation of the shrinkage step size. It produces shrinkage step size dynamically by using the Barzilai-Borwein technique along with a non-monotone strategy. 

\subsubsection{\tt GSCG:} Generalized shrinkage conjugate gradient method \cite{ESM} presents a descent condition for the minimization problem \eqref{e:defprob1}. If such a condition holds, it uses an efficient descent direction generated by a combination of a generalized form of the conjugate gradient direction and the {\tt ISTA} descent direction \eqref{e:dks}. Otherwise, {\tt ISTA} 
is improved by a new form of $\tau_k$ that is a specific linear combination of the Barzilai-Borwein technique and the previous step size.

\subsubsection{\tt ISGA:} Iterative Shrinkage-Goldstein algorithm \cite{SB} uses \eqref{e:dks} with $\tau_k$ generated by the Barzilai-Borwein method to produce descent direction. Generating efficient step sizes $\alpha_k$ in \eqref{e:iterscheme}, {\tt ISGA} utilizes the Goldstein quotient under a sufficient descent condition, which speeds up the technique.

\section{Dictionary Learning Algorithm}

Considering online SDL as in \cite{MBPS} and the algorithms reviewed in Section \ref{Acc}, we produce different dictionaries corresponding to different sparse recovery methods for image reconstruction. Algorithm \ref{alg:one} gives an overview on resulting SDL methods. 

First, the algorithm chooses a solver to produce the corresponding dictionary (lines 1-2). At each iteration,  randomly, the algorithm draws one patch from the learning image set (line 5) and produces the desired dictionary based on the given number of patches in the initialization step ($N$). The main loop (lines 4-12) contains the sparse coding (lines 6-9) and dictionary updating stage (lines 10-11). Lines 8-9 of the main loop reset the past information related to the produced sparse coefficient for the patches. Thus, one needs some initialization (line 3) for that. In the sequel, we give more explanations for these steps. 

\subsubsection{Sparse Coding Stage}
 At iteration $k$ for the fixed dictionary $D_{k-1}$ of the previous iteration, line 7 computes sparse representation for the chosen patch $p_k$ using the considered sparse solver. 
Lines 8-9 save patch sparse information in
the matrices $A_k$ and $B_k$. Then, the algorithm updates the dictionary with updated $A_k$ and $B_k$ based on the $p_k$ sparse information.    

\subsubsection{Dictionary Updating Stage}
In this stage of iteration $k$, for the stored fixed sparse representation values of patches until step $k$, the algorithm updates the dictionary by using $D_{k-1}$ as a warm start. In the Appendix, we show the equality of both minimization problems in \eqref{e:algeq1}.

Like in \cite{MBPS}, the algorithm uses the block coordinate descent methods for updating via Algorithm \ref{alg:two}.
In the Appendix, we clarify how \eqref{e:algeq2} gives the solution of \eqref{e:algeq1} column-wise. Updating is only done for atoms with contributions in the sparse representation of the sparse coding stage, i.e., $A_{jj}\neq 0$ (a detail not mentioned in \cite{MBPS}). This process is column-wise, i.e., updating one atom while keeping the others fixed under the constraint $ d_j^Td_j\leq 1$. As shown in \cite{MBPS}, since this convex optimization problem admits separate constraints in the updated blocks (atoms), convergence to a global optimum is guaranteed. Similar to \cite{MBPS}, by warm restart with $D_{k-1}$, the algorithm does just one single iteration in this stage.

\begin{algorithm}
\caption{Dictionary Learning}\label{alg:one}
\KwData{$D_0\in\mathbb{R}^{m\times n}$ (initial dictionary), $N$ (number of the iterations),\\ $p\in\mathbb{R}^m$ (image patch), $\mu\in\mathbb{R}$ (regularization parameter)}
\KwResult{$D_N$ (learned dictionary)}
\vspace{2mm}
 $\text{List}=\{\tt{FISTA, FPC-BB, TwIST, GSCG, ISGA}\}$\ \Comment*[r]{Sparse recovery solvers}
 Select a {\tt{solver}} from the List\;
 $A_0 \gets 0,~~B_0 \gets 0$ (reset the past information)\;
 \vspace{2mm}
 \For{$k=1$ to $N$}{
  Draw $p_k$ from image set\;
  \vspace{2mm}
  \textbf{Sparse Coding Stage:}\\ 
  \vspace{1mm}
    $x_{k} = \underset{x}{\textrm{\argmin}}\frac{1}{2}\|D_{k-1}x-p_k\|_2^2+\mu\|x\|_1$\;  
  $A_k \gets A_{k-1}+x_kx_k^T$\;
  $B_k \gets B_{k-1}+p_kx_k^T$\;
  \vspace{2mm}
  \textbf{Dictionary Updating Stage:}\\
  \vspace{1mm}
  Update $D_k$ using \textbf{Algorithm 2} with $D_{k-1}$ as a warm restart, so that
  \begin{equation}\label{e:algeq1}
  \begin{split}
    D_k=&\underset{D\in \mathcal{C}}{\textrm{\argmin}}\frac{1}{k}\sum_{i=1}^{k} \frac{1}{2}\|Dx_i-p_i\|_2^2+\mu\|x_i\|_1\\
  =& \underset{D\in \mathcal{C}}{\textrm{\argmin}}\frac{1}{k}\big(\frac{1}{2}\text{Tr}(D^TDA_k)-\text{Tr}(D^TB_k)\big)   
  \end{split}  
  \end{equation}
 }
\end{algorithm}

\begin{algorithm}
\caption{Dictionary Updating}\label{alg:two}
\KwData{$D_{k-1}\in\mathbb{R}^{m\times n}$,\\ $A_k=[a_1,\ldots, a_n]=\sum_{i=1}^{k}x_kx_k^T\in\mathbb{R}^{n\times n}$, $B_k=[b_1,\ldots, b_n]=\sum_{i=1}^{k}p_kx_k^T\in\mathbb{R}^{m\times n}$.}
\vspace{2mm}
\KwResult{$D_k$}
\vspace{2mm}
 \While{convergence}{
 \For{$j=1$ to $n$}{
  Update the $j$-th column to optimize for (\ref{e:algeq1})
  \vspace{2mm}
  \begin{equation}\label{e:algeq2}
  \begin{split}
     u_j&\gets \frac{1}{A_{jj}}\big(b_j-Da_j\big)+d_j\\
  d_j&\gets \frac{1}{\max\big(\|u_j\|_2, 1\big)}u_j 
  \end{split}      
  \end{equation}
 }
 }
\end{algorithm}

\section{Experimental Validation}
In this section, we present experiments on natural images from the {\tt Kaggle} dataset \cite{Kag}. The experiments  were run in MATLAB R2024b on an egino BTO with 128 × Intel$^{\circledR}$ Xeon$^{\circledR}$ Gold 6430 and 125,1 GiB of RAM. 

\subsubsection{Data Preparation}
To learn dictionaries, we considered a colorful subset of images with the cardinality $12$ and resolution $244\times244$. We did some preprocessing to make them ready for the learning process. First, we transferred images to a grayscale and downsampled them to the resolution $112\times112$, Figure \ref{fig: Tr-Im}. To have a fair comparison for all sparse recovery solvers mentioned in the first line of Algorithm \ref{alg:one}, we selected the images of the learning set by given order in Figure \ref{fig: Tr-Im}, left to right row-wise. We also extracted ordered patches with $50\%$ overlapping and size $6\times6$ from the ordered images. $50\%$ overlapping means each patch has $50\%$ of its area in common with the adjacent patches. So, the process extracts $1296$ patches from each training image. Thus, patch-wise dictionary learning has been done over $15552$ ordered patches, i.e., $N=15552$ in Algorithm \ref{alg:one}. Based on the patch size, we considered the size of the produced overcomplete dictionaries as $36\times256$.  

\begin{figure}
    \begin{subfigure}[t]{.49\textwidth}
        \centering
        \includegraphics[width = 0.23\linewidth, height = 0.22\linewidth]{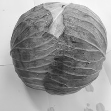}
        \includegraphics[width = 0.23\linewidth, height = 0.22\linewidth]{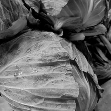} 
        \includegraphics[width = 0.23\linewidth, height = 0.22\linewidth]{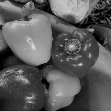} 
        \includegraphics[width = 0.23\linewidth, height = 0.22\linewidth]{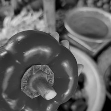}\\
        \vspace{0.3em}
        \includegraphics[width = 0.23\linewidth, height = 0.22\linewidth]{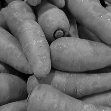} 
        \includegraphics[width = 0.23\linewidth, height = 0.22\linewidth]{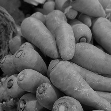} 
        \includegraphics[width = 0.23\linewidth, height = 0.22\linewidth]{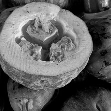}
        \includegraphics[width = 0.23\linewidth, height = 0.22\linewidth]{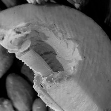}\\ 
        \vspace{0.3em}
        \includegraphics[width = 0.23\linewidth, height = 0.22\linewidth]{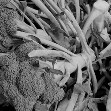} 
        \includegraphics[width = 0.23\linewidth, height = 0.22\linewidth]{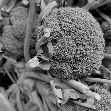} 
        \includegraphics[width = 0.23\linewidth, height = 0.22\linewidth]{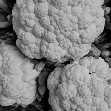}
        \includegraphics[width = 0.23\linewidth, height = 0.22\linewidth]{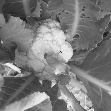}\\
    \caption{\label{fig: Tr-Im}
    Training images
    }
        
    \end{subfigure}
    \hfill
    \begin{subfigure}[t]{.49\textwidth}
        \centering
        \includegraphics[width = 0.23\linewidth, height = 0.22\linewidth]{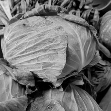}
        \includegraphics[width = 0.23\linewidth, height = 0.22\linewidth]{Fig1.1.png} 
        \includegraphics[width = 0.23\linewidth, height = 0.22\linewidth]{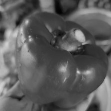} 
        \includegraphics[width = 0.23\linewidth, height = 0.22\linewidth]{Fig1.3.png}\\
        \vspace{0.3em}
        \includegraphics[width = 0.23\linewidth, height = 0.22\linewidth]{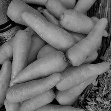} 
        \includegraphics[width = 0.23\linewidth, height = 0.22\linewidth]{Fig1.5.png} 
        \includegraphics[width = 0.23\linewidth, height = 0.22\linewidth]{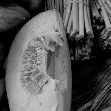}
        \includegraphics[width = 0.23\linewidth, height = 0.22\linewidth]{Fig1.7.png}\\ 
        \vspace{0.3em}
        \includegraphics[width = 0.23\linewidth, height = 0.22\linewidth]{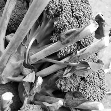} 
        \includegraphics[width = 0.23\linewidth, height = 0.22\linewidth]{Fig1.9.png} 
        \includegraphics[width = 0.23\linewidth, height = 0.22\linewidth]{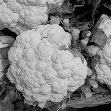}
        \includegraphics[width = 0.23\linewidth, height = 0.22\linewidth]{Fig1.11.png}\\ 
    \caption{\label{fig: Te-Im}
    Evaluation images
    }
    \end{subfigure}
    \caption{\label{fig: Tr-Te-Im}
   Training and evaluation images of the {\tt Kaggle} dataset.
    }
\end{figure}

\subsubsection{Important Parameter Settings}
For all solvers, the initial dictionary $D_0\in\mathbb{R}^{36\times256}$ is determined by employing a partial discrete cosine transform,  
the initial point value for sparse recovery is $x_0=\|D_i^Tp_i\|_{\infty}1_{256\times1}$, where $\lVert\cdot\rVert_{\infty}$ denotes infinity norm of a vector. 

We set $\mu=2^{-8}$ in the optimization problem \eqref{e:defprob1} in accordance to compressed sensing literature cf.~\cite{Ela}. 

The stopping condition for all solvers in line 7 of Algorithm \ref{alg:one} is given by \begin{equation}
  \|x_{k+1}-x_{k}\|_2\leq {\epsilon_1}\|x_k\|_2 
\end{equation}
with $\epsilon_1=10^{-7}$ for {\tt GSCG} and {\tt ISGA} and $\epsilon_1=10^{-5}$ for {\tt FISTA}, {\tt FPC-BB}, and {\tt TwIST} or if the number of iterations exceeds
100000.
Let us comment that by extensive computations not documented here, this difference in setting $\epsilon_1$ is necessary to achieve results of comparable quality.

For image reconstruction with learned dictionaries in terms of the original image ($\text{{Img}}_{\text{org}}$) and reconstructed image (${\text{Img}}_{\text{rec}}$), the stopping criterion is as
\begin{equation}\label{e:ReErFr0}
  \|\text{{Img}}_{\text{org}}-{\text{Img}}_{\text{rec}}\|_F\leq {\epsilon_2}\|\text{{Img}}_{\text{org}}\|_F 
\end{equation}
using the Frobenius norm, with $\epsilon_2=10^{-10}$ for {\tt GSCG} and {\tt ISGA} and $\epsilon_2=10^{-5}$ for {\tt FISTA}, {\tt FPC-BB}, and {\tt TwIST}, or if the number of iterations exceeds
5000000.
Again, let us comment that this difference in $\epsilon_2$ is necessary to achieve results of comparable quality.

All other parameters of any given solver are the default as described in the corresponding paper. 
\begin{figure}[H]
\centering
\begin{subfigure}[t]{0.49\textwidth}
    \centering
    \includegraphics[width=\textwidth]{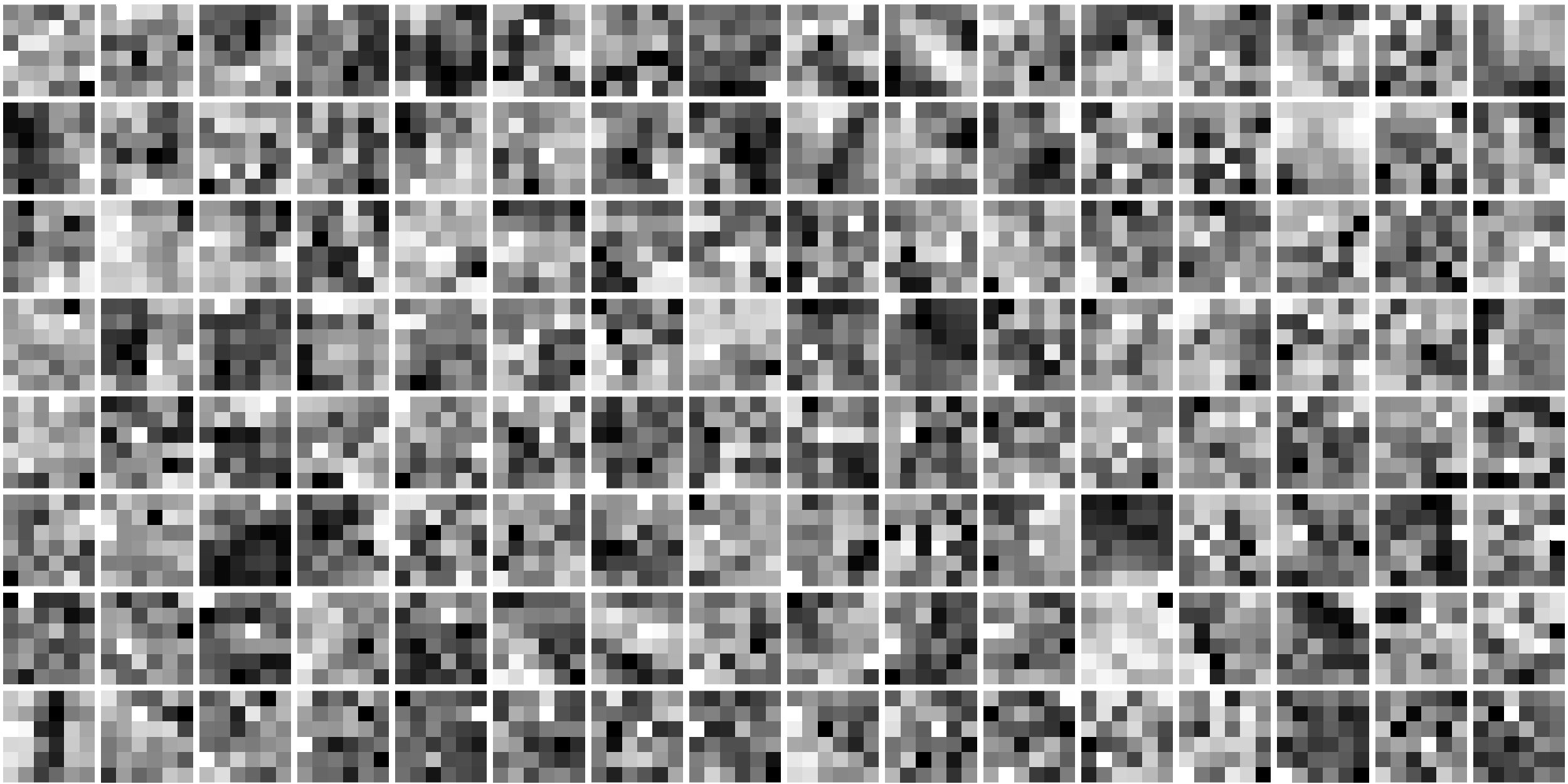}
    \caption{D12-\tt{FISTA}}
    \label{fig:D12Fis}
\end{subfigure}
\hfill
\begin{subfigure}[t]{0.49\textwidth}
    \centering
    \includegraphics[width=\textwidth]{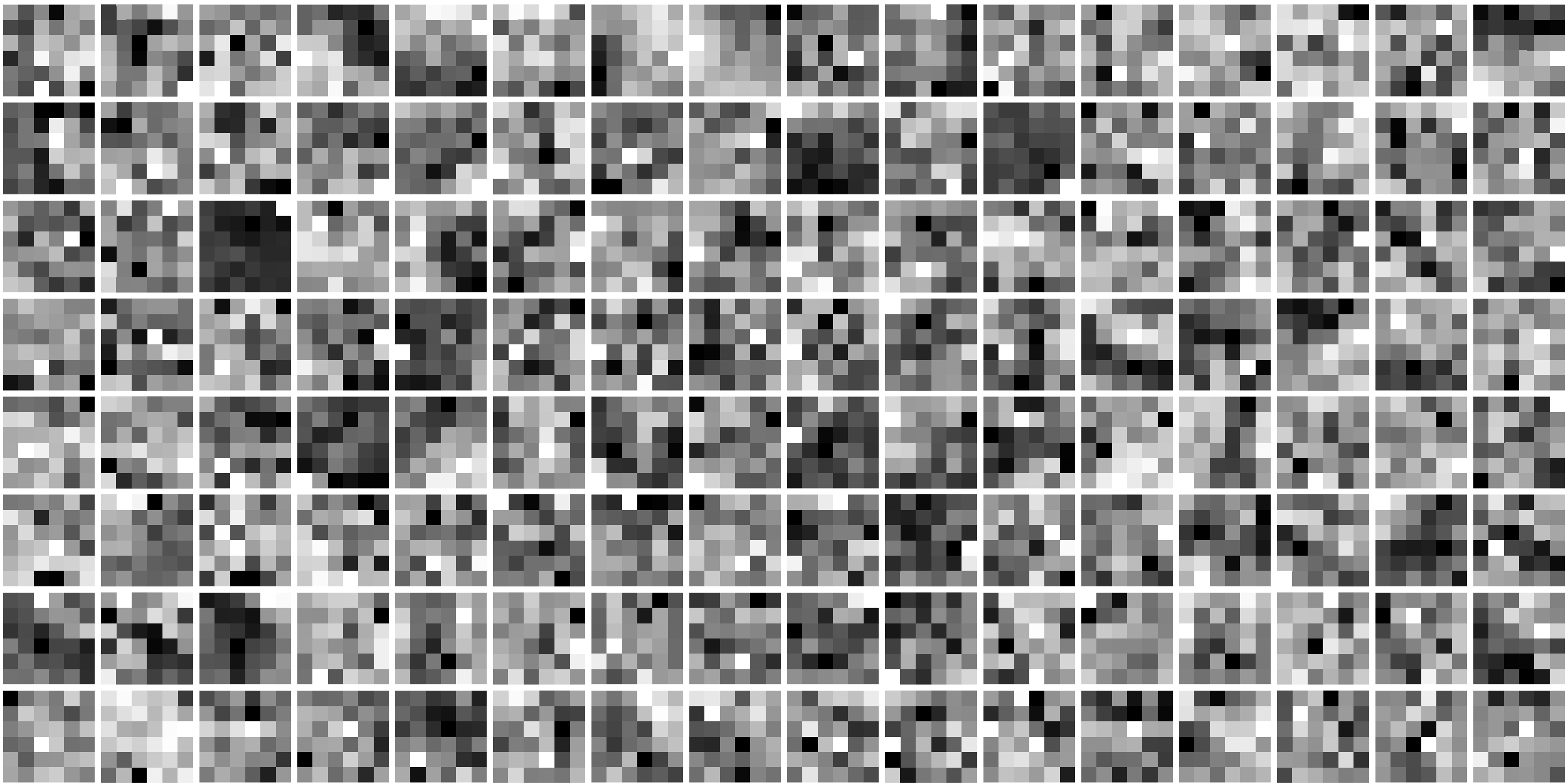}
    \caption{D12-\tt{FPC-BB}}
    \label{fig:D12FPCBB}
\end{subfigure}
\begin{subfigure}[t]{0.49\textwidth}
    \centering
    \includegraphics[width=\textwidth]{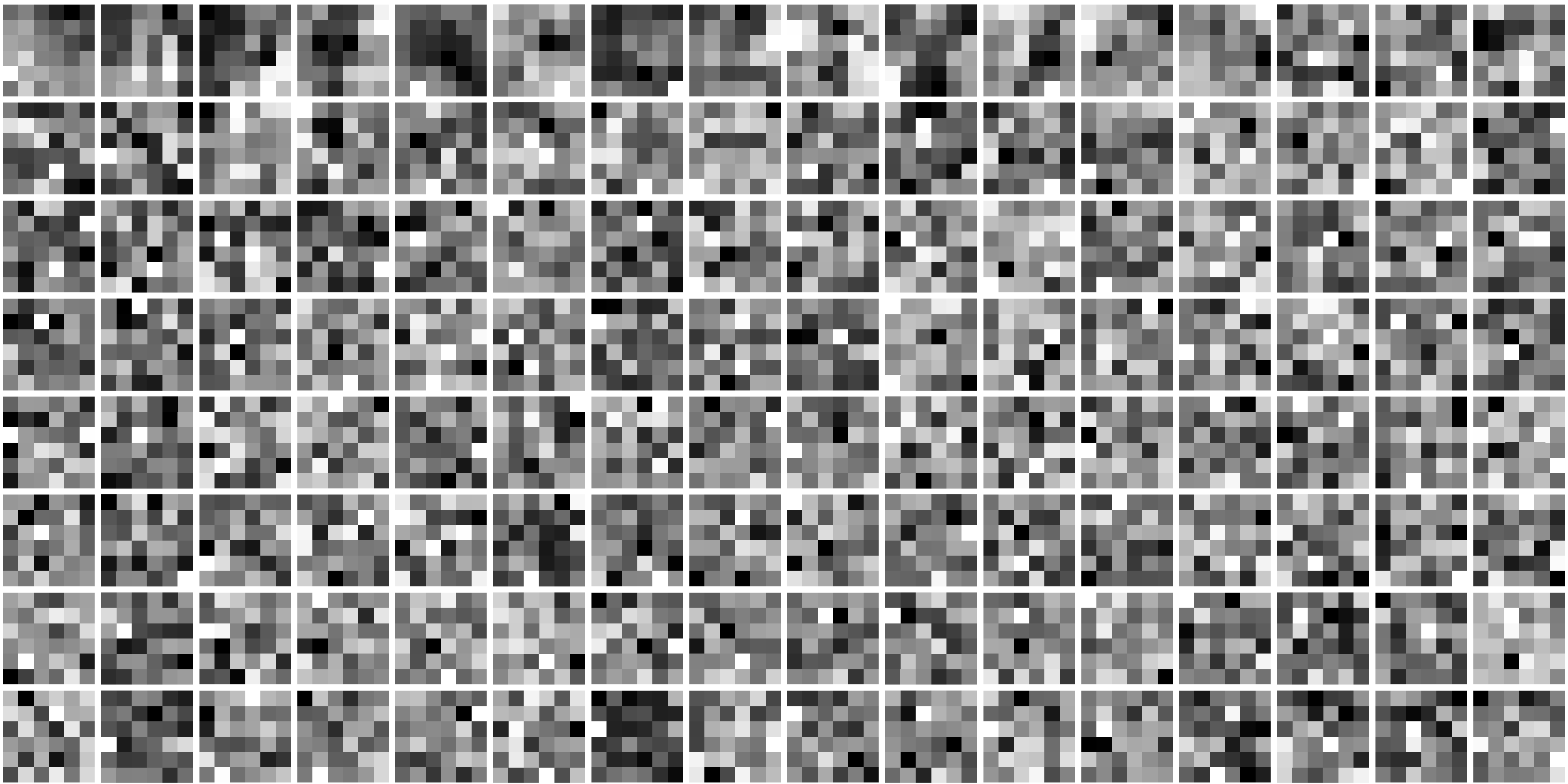}
    \caption{D12-\tt{TwIST}}
    \label{fig:D12TwIST}
\end{subfigure}
\hfill
\begin{subfigure}[t]{0.49\textwidth}
    \centering
    \includegraphics[width=\textwidth]{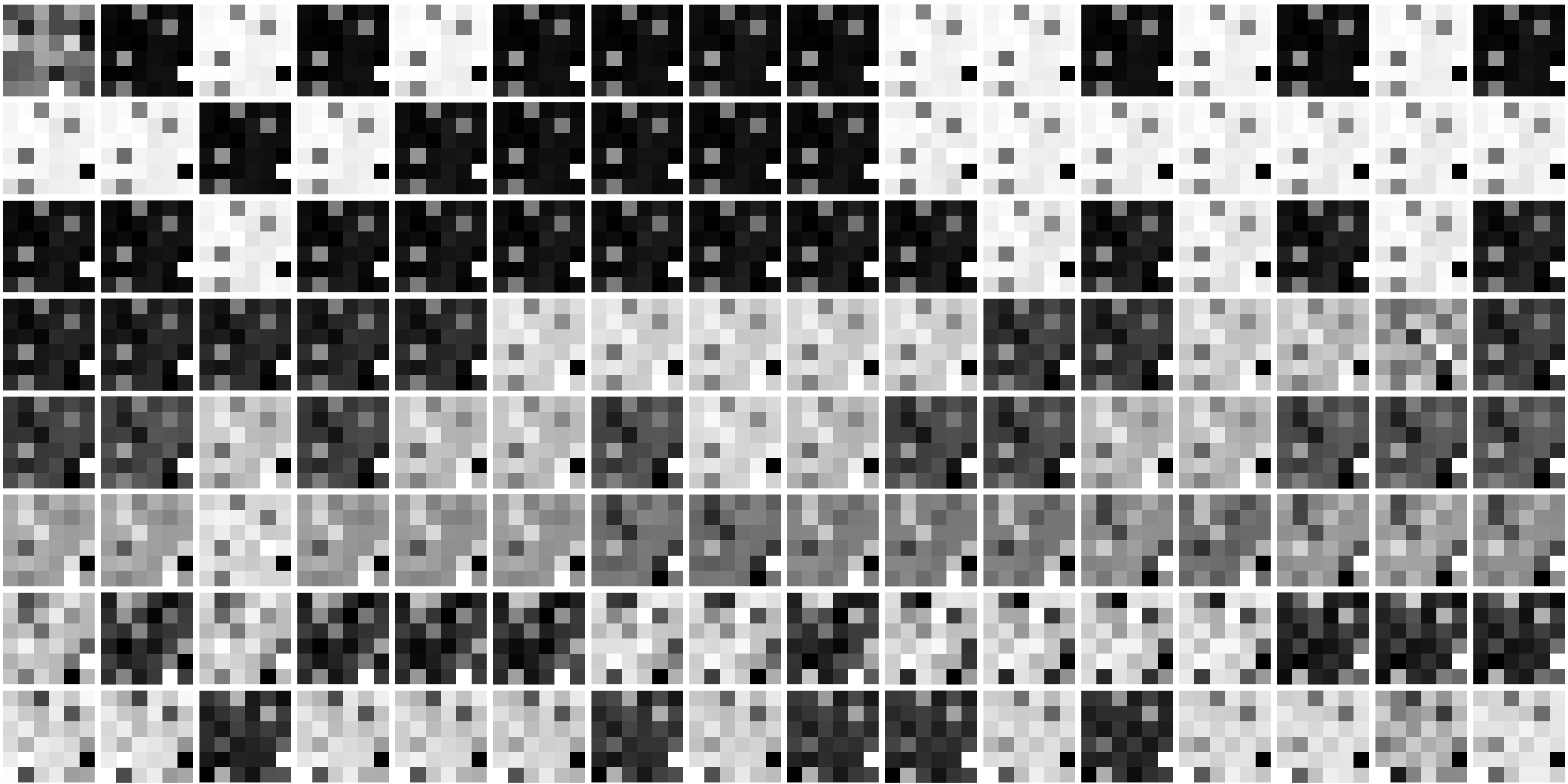}
    \caption{D12-\tt{GSCG}}
    \label{fig:D12GSCG}
\end{subfigure}
\vspace{0.5cm}
\begin{subfigure}[t]{0.49\textwidth}
    \centering
    \includegraphics[width=\textwidth]{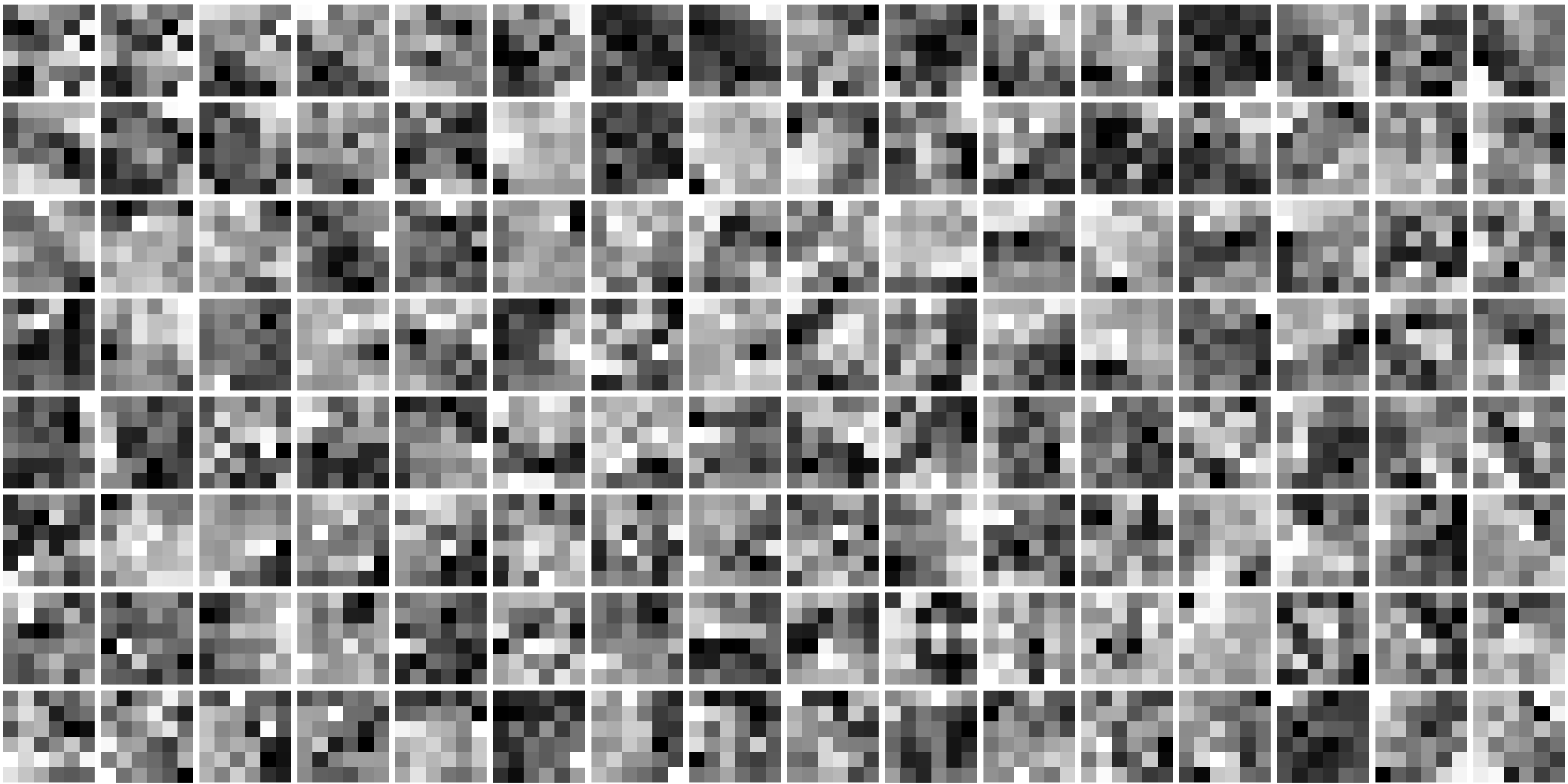}
    \caption{D12-\tt{ISGA}}
    \label{fig:D12ISGA}
\end{subfigure}
\caption{Partial visualization of the final learned dictionaries with {\tt FISTA}, {\tt FPC-BB}, {\tt TwIST}, {\tt GSCG}, and {\tt ISGA}.}
\label{fig:D12Comparison}
\end{figure}

\subsubsection{Learning the Dictionary}
Algorithms usually choose images from the data set randomly for the learning process. In this paper, to have a fair comparison between all sparse recovery solvers, we followed the given row-wise order in Figure \ref{fig: Tr-Im}. Given the initial dictionary $D_0$ with proper dimension, based on the ordered extracted patches of the first image, Algorithm \ref{alg:one} produces the first dictionary $\mathrm{D}1$. The method generates $\mathrm{D}2$ using the second training image with warm start $\mathrm{D}1$ at its first updating with respect to the first patch of the second image in Algorithm \ref{alg:two}. Continuing in this way for all optimization methods, we get the final learned dictionary called $\mathrm{D12}$. In Figure \ref{fig:D12Comparison}, we have a partial visualization of the learned dictionaries for all sparse recovery solvers. 

Let us make clear the notation $\mathrm{D1}, \ldots, \mathrm{D12}$.  $\mathrm{D1}$ is the dictionary generated by the patches of the first training image. $\mathrm{D2}$ is generated by the patches of the first and second training images and $\mathrm{D12}$ is the final dictionary generated by the whole training images. Here, $D_0=\mathrm{D}0$.   

\subsubsection{Image Recovery}
In the image reconstruction, we used evaluation images in Figure \ref{fig: Te-Im}. This set includes $6$ images from the training image dataset (columns $2$, $4$) as well as $6$ images not out of it but from the same categories (columns $1$, $3$). For the method reconstruction evaluations, we call images in odd columns and even columns as $\text{Test}_{i}$ and $\text{Train}_{i}$, respectively, for $i=1,\ldots,6$ and left to right. 

Reconstruction process is patch-wise in terms of the minimization problem \eqref{e:defprob1} for all sparse recovery algorithms. To reconstruct the whole image, the algorithms average the overlapped areas of the reconstructed patches based on the overlapping percentage, see Figure \ref{fig:gray-grid}.  
\begin{figure}[t]
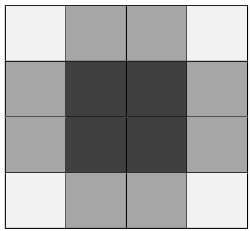

\centering
\renewcommand{\arraystretch}{1.88} 
\setlength{\tabcolsep}{2pt} 
\begin{tabular}{|>{\centering\arraybackslash}m{2em}|>{\centering\arraybackslash}m{2em}|>{\centering\arraybackslash}m{2em}|>{\centering\arraybackslash}m{2em}|}
\hline
\cellcolor{gray!10} & \cellcolor{gray!70} & \cellcolor{gray!70} & \cellcolor{gray!10} \\ \hline
\cellcolor{gray!70} & \cellcolor{gray!150} & \cellcolor{gray!150} & \cellcolor{gray!70} \\ \hline
\cellcolor{gray!70} & \cellcolor{gray!150} & \cellcolor{gray!150} & \cellcolor{gray!70} \\ \hline
\cellcolor{gray!10} & \cellcolor{gray!70} & \cellcolor{gray!70} & \cellcolor{gray!10} \\ \hline
\end{tabular}
\caption{Schematic grid to show patch averaging}
\label{fig:gray-grid}
\end{figure}

\subsubsection{Evaluation of Methods for Image Recovery}
In this section, we investigate the reconstruction process of all sparse solvers. Figure \ref{fig:first_page} and Figure \ref{fig:second_page} show the diagrams of absolute error  ({\tt AbEr}) in the left column and relative error ({\tt ReEr}) in the right column for the image reconstruction of evaluation dataset, Figure \ref{fig: Te-Im}, for the initial dictionary $\mathrm{D}0$ and all learned dictionaries $\mathrm{D}1$-$\mathrm{D12}$. Here, our error measurements are
\begin{equation}\label{e:AbErFr}
 {\tt AbEr}=\|\text{{Img}}_{\text{org}}-{\text{Img}}_{\text{rec}}\|_F
\quad \textrm{and} \quad
 {\tt ReEr}=\frac{\|\text{{Img}}_{\text{org}}-{\text{Img}}_{\text{rec}}\|_F}{\|\text{{Img}}_{\text{org}}\|_F}
\end{equation}

As can be seen, {\tt FISTA}, {\tt ISGA} and also {\tt FPC-BB} give a very similar behaviour in terms of error measures. 
Also they show an expected behaviour as errors decrease when putting in more data and thus expanding the dictionaries.
Concerning this expected behaviour, let us comment at this point also on what can actually be observed in the Figures \ref{fig:first_page} and \ref{fig:second_page}.
As pointed out before, the images in our image datasets are denoted as $\text{Test}_{i}$ and $\text{Train}_{i}$, respectively, for $i=1,\ldots,6$.
Correspondingly, each graph in these figures gives an account of the reconstruction of the corresponding individual image, given the indicated dictionary.
What is thus desirable is an error decrease that is ideally uniformly over all the graphs, meaning for all individual reconstructed images.

In contrast, {\tt TwIST} shows a very rapid decrease in errors, but on the other hand it seems that the process of adding data during training has no beneficial effect, which is surely not intuitive and may make {\tt TwIST} not a potentially attractive method of choice if learning over a large dataset is possible.
The {\tt GSCG} 
\begin{figure}[H]
\centering
\begin{subfigure}{0.49\textwidth}
\centering
\includegraphics[width=\textwidth]{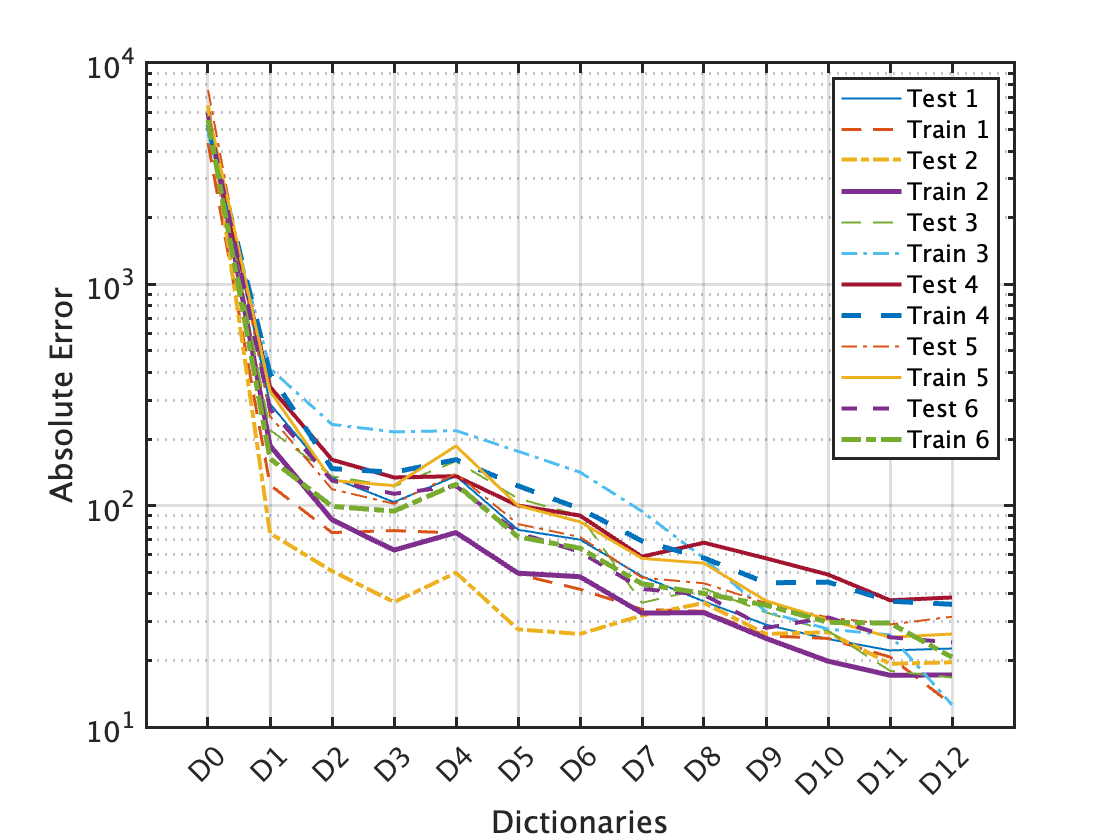}
\caption{{\tt AbEr}-\tt{FISTA}}
\label{fig:abs_error_fista}
\end{subfigure}
\hfill
\begin{subfigure}{0.49\textwidth}
\centering
\includegraphics[width=\textwidth]{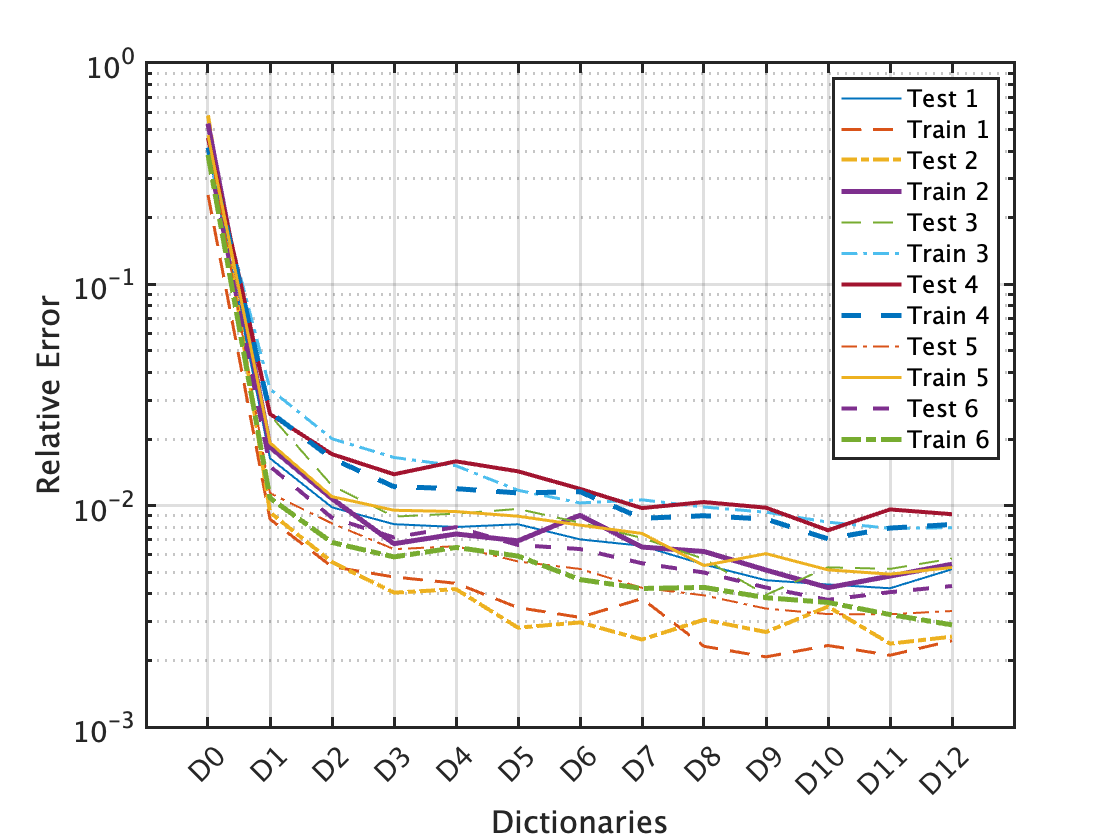}
\caption{{\tt ReEr}-\tt{FISTA}}
\label{fig:rel_error_fista}
\end{subfigure}

\vspace{0.5cm}

\begin{subfigure}{0.49\textwidth}
\centering
\includegraphics[width=\textwidth]{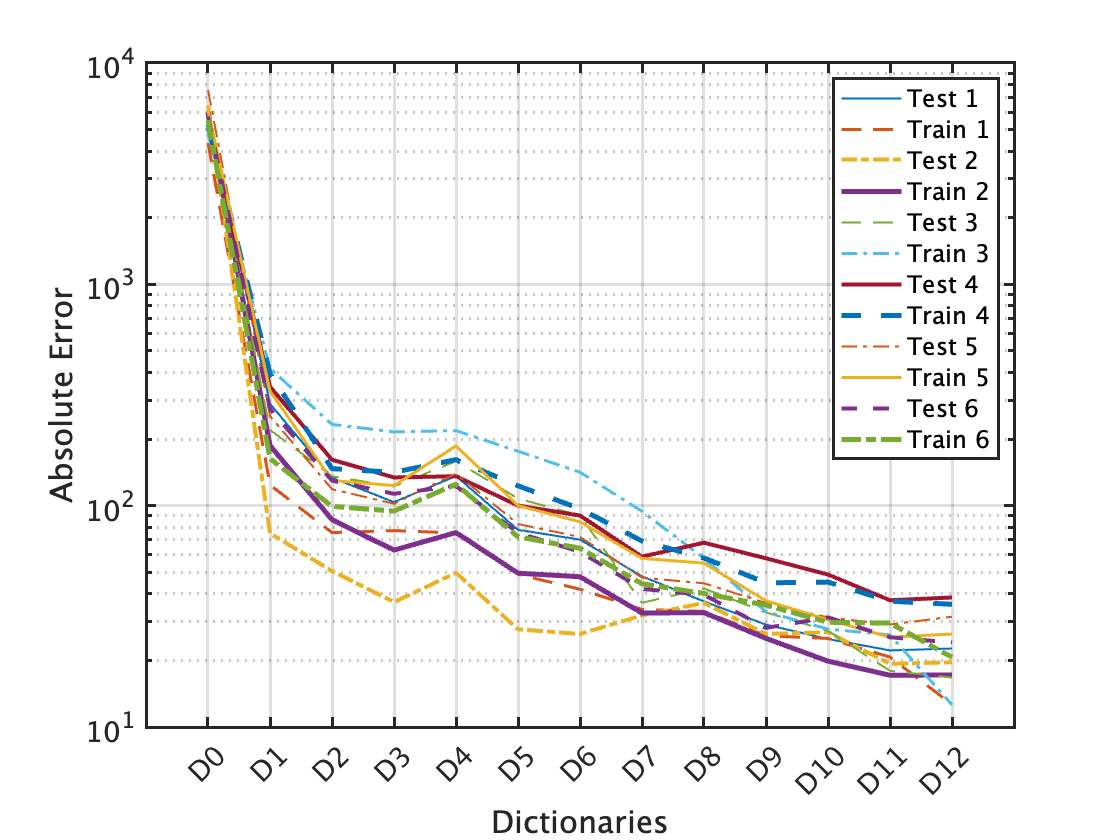}
\caption{{\tt AbEr}-\tt{ISGA}}
\label{fig:abs_error_isga}
\end{subfigure}
\hfill
\begin{subfigure}{0.49\textwidth}
\centering
\includegraphics[width=\textwidth]{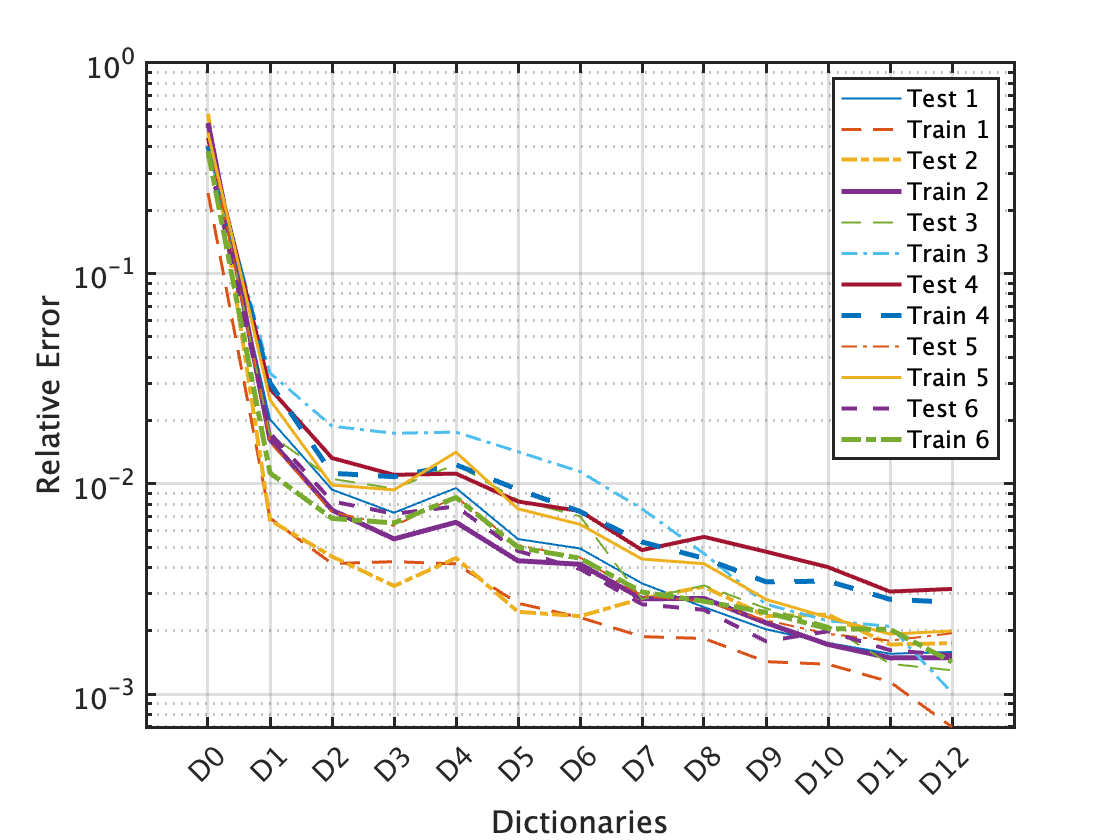}
\caption{{\tt ReEr}-\tt{ISGA}}
\label{fig:rel_error_isga}
\end{subfigure}

\vspace{0.5cm}

\begin{subfigure}{0.49\textwidth}
\centering
\includegraphics[width=\textwidth]{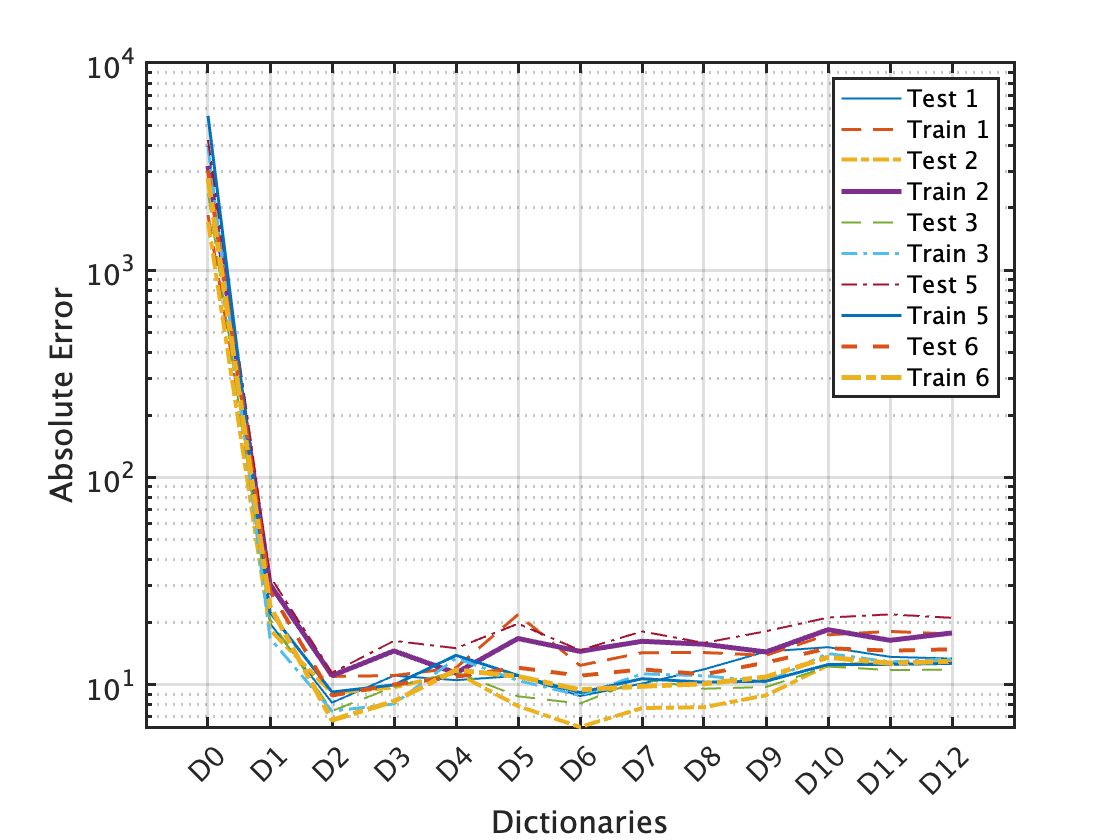}
\caption{{\tt AbEr}-\tt{TwIST}}
\label{fig:abs_error_twist}
\end{subfigure}
\hfill
\begin{subfigure}{0.49\textwidth}
\centering
\includegraphics[width=\textwidth]{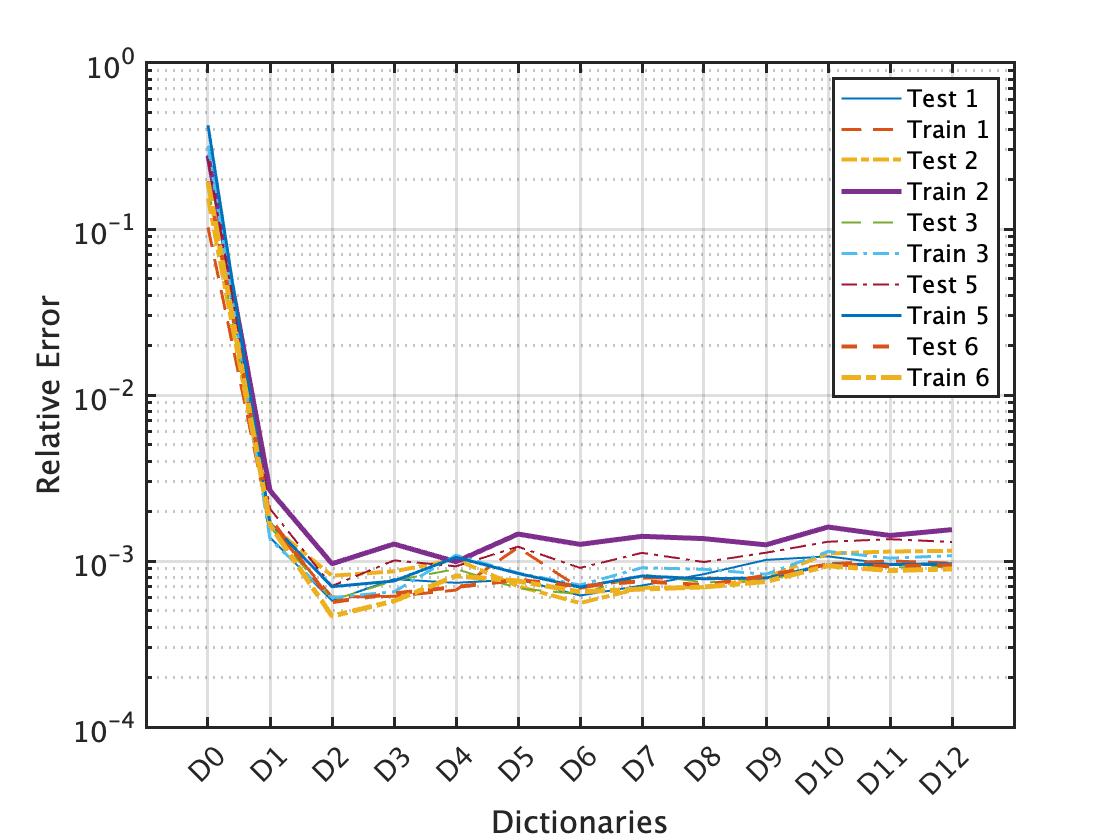}
\caption{{\tt ReEr}-\tt{TwIST}}
\label{fig:rel_error_twist}
\end{subfigure}

\caption{Diagram of absolute reconstruction errors
versus learned dictionaries (left column) and relative reconstruction errors versus learned dictionaries (right column) for {\tt FISTA}, {\tt ISGA}, and {\tt TwIST}. Note that we decided to show here the natural (not unified) scales of results since we would like to emphasize the general behaviour of errors as they develop when expanding the dictionaries.}
\label{fig:first_page}
\end{figure}

\begin{figure}[h]
\centering

\begin{subfigure}{0.49\textwidth}
\centering
\includegraphics[width=\textwidth]{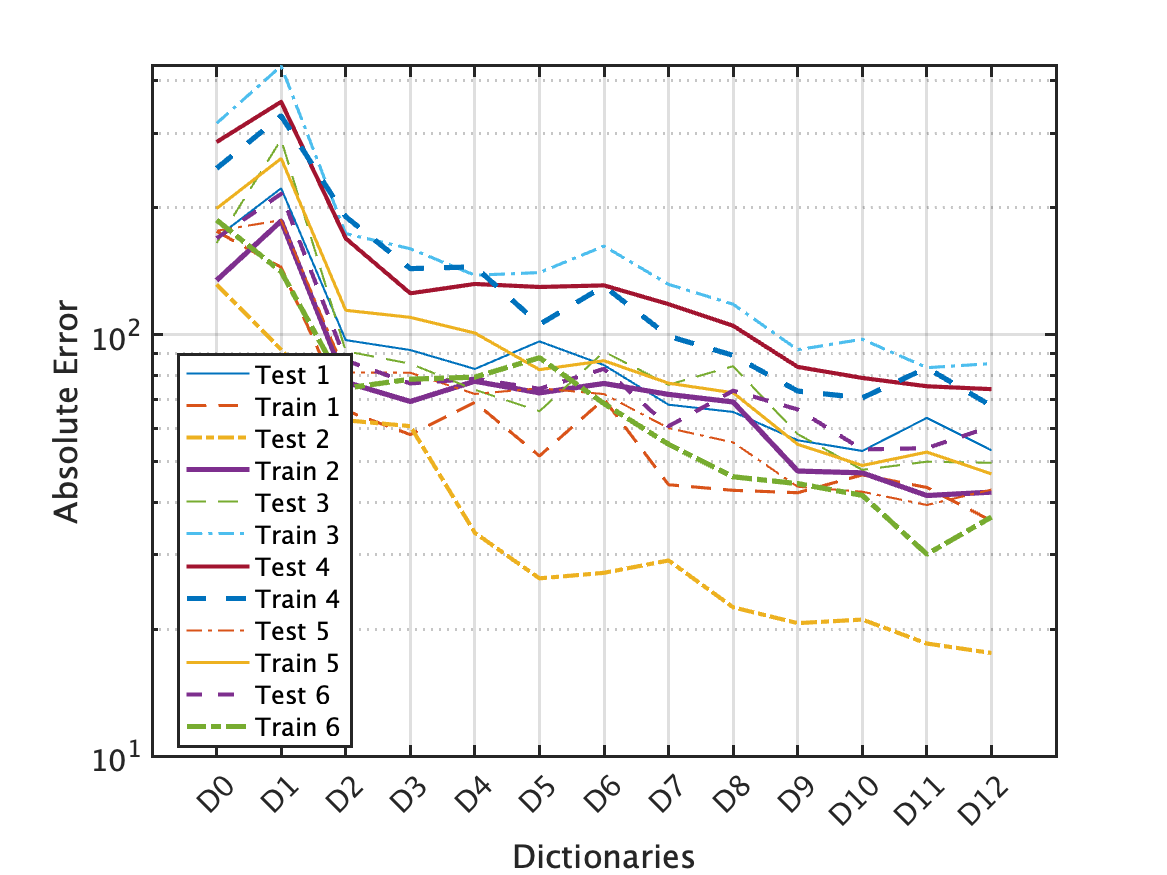}
\caption{{\tt AbEr}-\tt{FPC-BB}}
\label{fig:abs_error_fpcbb}
\end{subfigure}
\hfill
\begin{subfigure}{0.49\textwidth}
\centering
\includegraphics[width=\textwidth]{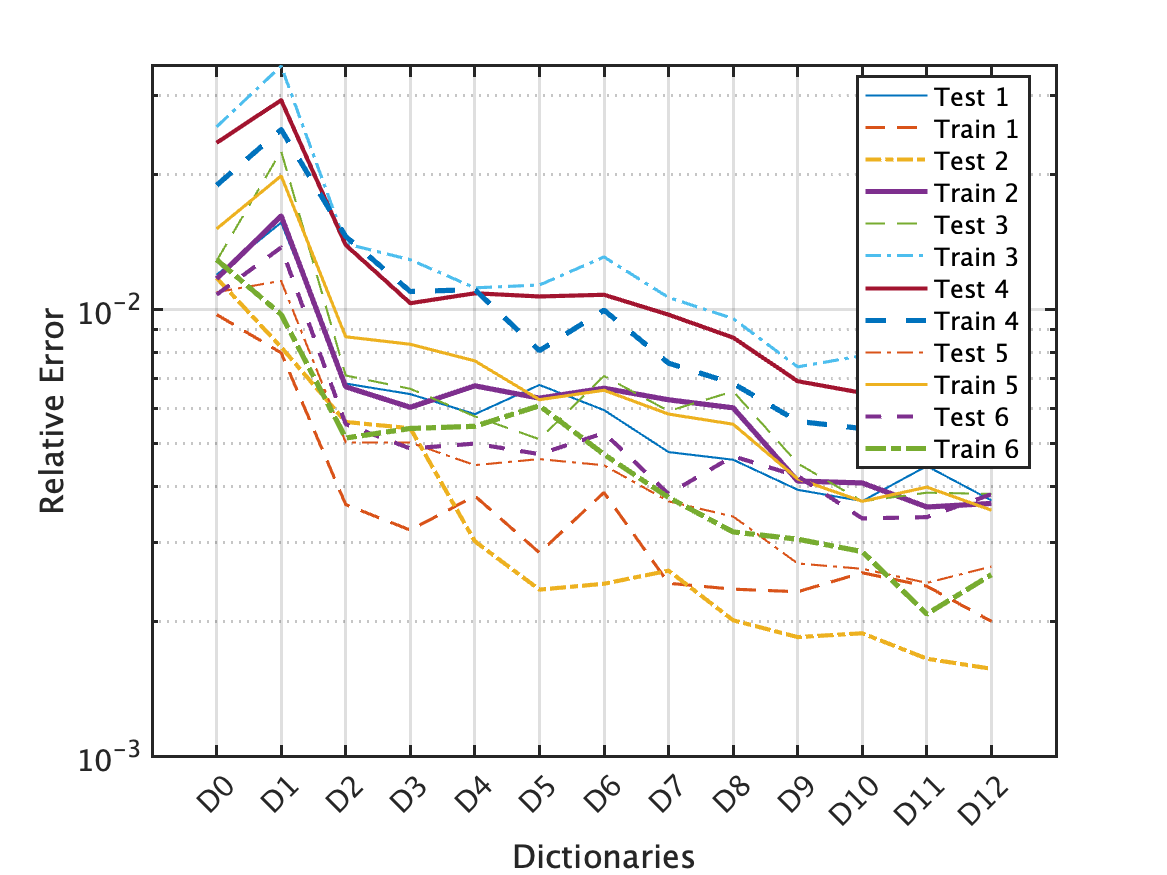}
\caption{{\tt ReEr}-\tt{FPC-BB}}
\label{fig:rel_error_fpcbb}
\end{subfigure}

\vspace{0.3cm}

\begin{subfigure}{0.49\textwidth}
\centering
\includegraphics[width=\textwidth]{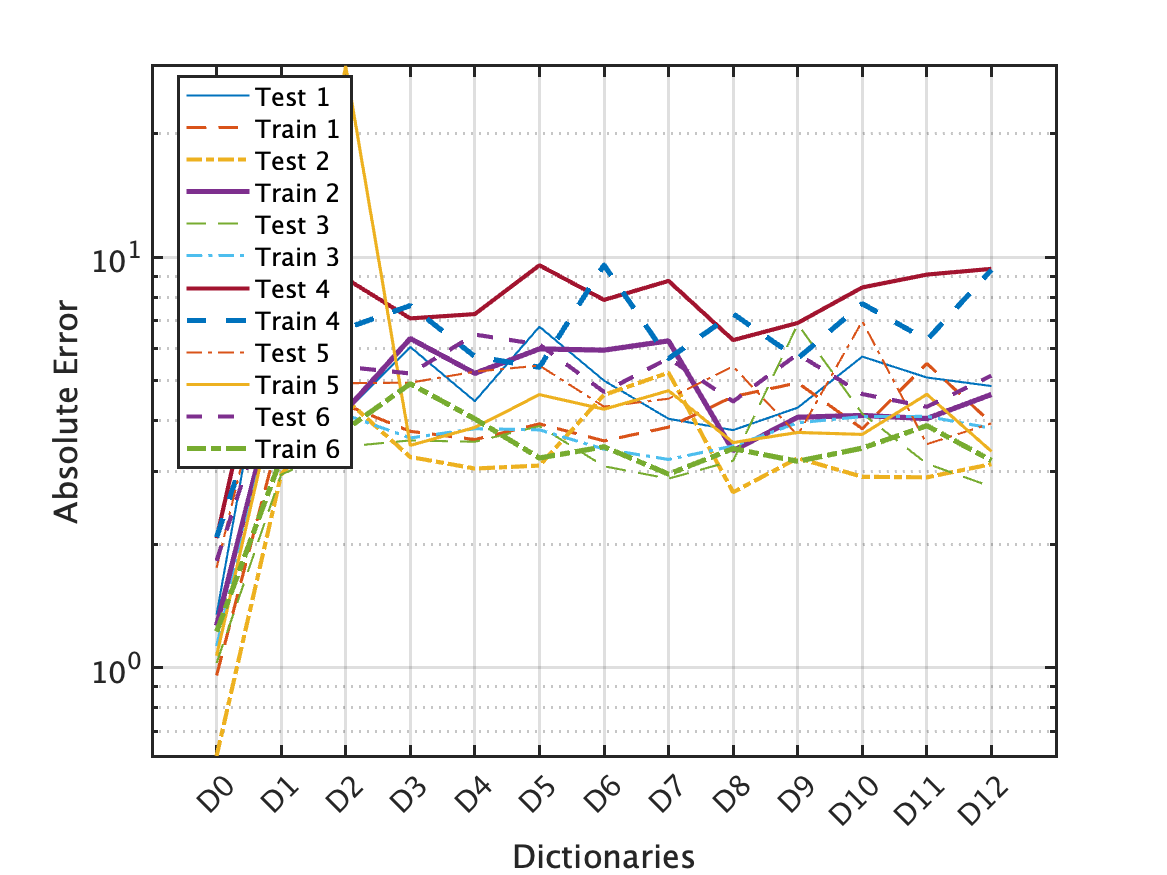}
\caption{{\tt AbEr}-\tt{GSCG}}
\label{fig:abs_error_gscg}
\end{subfigure}
\hfill
\begin{subfigure}{0.49\textwidth}
\centering
\includegraphics[width=\textwidth]{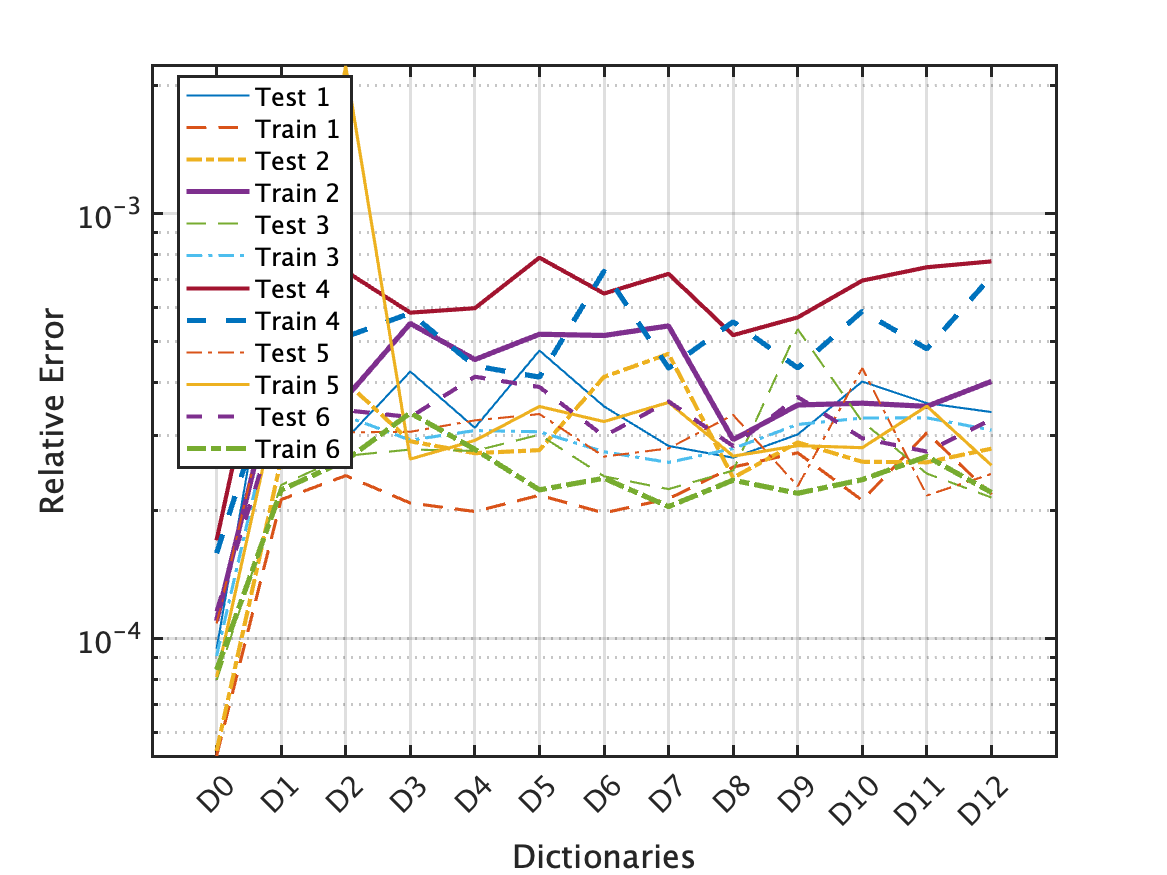}
\caption{{\tt ReEr}-\tt{GSCG}}
\label{fig:rel_error_gscg}
\end{subfigure}

\caption{Diagram of absolute reconstruction errors
versus learned dictionaries (left column) and relative reconstruction errors versus learned dictionaries (right column) for {\tt FPC-BB} and {\tt GSCG}.
Note that we decided to show here the natural (not unified) scales of results since we would like to emphasize the general behaviour of errors as they develop when expanding the dictionaries.
}
\label{fig:second_page}
\end{figure}
method appears to enable very low reconstruction errors even with little training data for dictionary construction, which is different from the behaviour of {\tt TwIST}. Like for {\tt TwIST}, it appears not beneficial to expand the training database.

For quantitative evaluation, we may also consider averages computed over the whole available dataset, as this gives a reasonable statistical account.
To this end, we consider in addition to the error measures defined in \eqref{e:AbErFr} the infinity norm of differences. This captures the highest deviations over complete images, which complements norms employing a summation of differences over imagery:
\begin{equation}\label{e:InErFr}
 {\tt InEr}=\|\text{{Img}}_{\text{org}}-{\text{Img}}_{\text{rec}}\|_{\infty}
\end{equation}
The results on averaged error measures together with the infinity norm of errors are summarized in Table \ref{table:average-errors}.

\begin{table}[H]
\renewcommand{\arraystretch}{1.2} 
\setlength{\tabcolsep}{3pt}       
\centering

\begin{subtable}[t]{\textwidth}
\centering
\begin{tabular}{|c||>{\centering\arraybackslash}p{1cm}|>{\centering\arraybackslash}p{1cm}|>{\centering\arraybackslash}p{0.85cm}||>{\centering\arraybackslash}p{1cm}|>{\centering\arraybackslash}p{0.85cm}|>{\centering\arraybackslash}p{0.65cm}||>{\centering\arraybackslash}p{1cm}|>{\centering\arraybackslash}p{0.85cm}|>{\centering\arraybackslash}p{0.65cm}|}
\hline
\textbf{Method} & \multicolumn{3}{|c||}{$\mathrm{D}0$} & \multicolumn{3}{|c||}{$\mathrm{D}1$} & \multicolumn{3}{|c|}{$\mathrm{D}2$} \\
\cline{2-10}
& {\tt ReEr} & {\tt AbEr} & {\tt InEr} & {\tt ReEr} & {\tt AbEr} & {\tt InEr} & {\tt ReEr} & {\tt AbEr} & {\tt InEr} \\
\hline
\tt{FISTA} & 0.43237 & 5824.15 & 166.66 & 0.01841 & 245.33 & 8.10 & 0.01101 & 147.15 & 5.13 \\
\hline
\tt{FPC-BB} & 0.01464 & 196.58 & 8.77 & 0.01792 & 238.35 & 7.70 & 0.00806 & 106.99 & 3.47 \\
\hline
\tt{TwIST} & 0.19906 & 2726.35 & 118.97 & 0.00146 & 20.34 & 0.71 & 0.00055 & 7.54 & 0.35 \\
\hline
\tt{GSCG} & 0.00010 & 1.36 & 0.04 & 0.00042 & 5.71 & 0.25 & 0.00052 & 6.99 & 0.40 \\
\hline
\tt{ISGA} & 0.42500 & 5723.25 & 165.73 & 0.01900 & 255.92 & 9.81 & 0.00931 & 125.05 & 4.29 \\
\hline
\end{tabular}
\end{subtable}

\vspace{0.3cm}

\begin{subtable}[t]{\textwidth}
\centering
\begin{tabular}{|c||>{\centering\arraybackslash}p{1cm}|>{\centering\arraybackslash}p{1cm}|>{\centering\arraybackslash}p{0.85cm}||>{\centering\arraybackslash}p{1cm}|>{\centering\arraybackslash}p{0.85cm}|>{\centering\arraybackslash}p{0.65cm}||>{\centering\arraybackslash}p{1cm}|>{\centering\arraybackslash}p{0.85cm}|>{\centering\arraybackslash}p{0.65cm}|}
\hline
\textbf{Method} & \multicolumn{3}{|c||}{$\mathrm{D}3$} & \multicolumn{3}{|c||}{$\mathrm{D}4$} & \multicolumn{3}{|c|}{$\mathrm{D}5$} \\
\cline{2-10}
& {\tt ReEr} & {\tt AbEr} & {\tt InEr} & {\tt ReEr} & {\tt AbEr} & {\tt InEr} & {\tt ReEr} & {\tt AbEr} & {\tt InEr} \\
\hline
\tt{FISTA} & 0.00868 & 116.61 & 3.80 & 0.00889 & 119.26 & 4.13 & 0.00797 & 106.85 & 3.46 \\
\hline
\tt{FPC-BB} & 0.00712 & 94.94 & 3.05 & 0.00674 & 90.24 & 2.98 & 0.00627 & 83.99 & 2.85 \\
\hline
\tt{TwIST} & 0.00065 & 9.07 & 0.39 & 0.00074 & 10.16 & 0.47 & 0.00077 & 10.88 & 0.48 \\
\hline
\tt{GSCG} & 0.00037 & 4.99 & 0.26 & 0.00035 & 4.69 & 0.32 & 0.00038 & 5.16 & 0.34 \\
\hline
\tt{ISGA} & 0.00820 & 110.56 & 4.03 & 0.00980 & 132.05 & 5.77 & 0.00650 & 86.76 & 3.19 \\
\hline
\end{tabular}
\end{subtable}

\vspace{0.3cm}

\begin{subtable}[t]{\textwidth}
\centering
\begin{tabular}{|c||>{\centering\arraybackslash}p{1cm}|>{\centering\arraybackslash}p{1cm}|>{\centering\arraybackslash}p{0.85cm}||>{\centering\arraybackslash}p{1cm}|>{\centering\arraybackslash}p{0.85cm}|>{\centering\arraybackslash}p{0.65cm}||>{\centering\arraybackslash}p{1cm}|>{\centering\arraybackslash}p{0.85cm}|>{\centering\arraybackslash}p{0.65cm}|}
\hline
\textbf{Method} & \multicolumn{3}{|c||}{$\mathrm{D}6$} & \multicolumn{3}{|c||}{$\mathrm{D}7$} & \multicolumn{3}{|c|}{$\mathrm{D}8$} \\
\cline{2-10}
& {\tt ReEr} & {\tt AbEr} & {\tt InEr} & {\tt ReEr} & {\tt AbEr} & {\tt InEr} & {\tt ReEr} & {\tt AbEr} & {\tt InEr} \\
\hline
\tt{FISTA} & 0.00738 & 98.61 & 3.26 & 0.00643 & 86.37 & 2.80 & 0.00588 & 78.24 & 2.39 \\
\hline
\tt{FPC-BB} & 0.00674 & 90.26 & 2.87 & 0.00559 & 74.18 & 2.32 & 0.00528 & 70.33 & 2.11 \\
\hline
\tt{TwIST} & 0.00062 & 8.61 & 0.45 & 0.00073 & 10.06 & 0.48 & 0.00071 & 9.81 & 0.49 \\
\hline
\tt{GSCG} & 0.00038 & 4.99 & 0.28 & 0.00036 & 4.82 & 0.23 & 0.00031 & 4.29 & 0.34 \\
\hline
\tt{ISGA} & 0.00552 & 73.99 & 2.74 & 0.00371 & 49.74 & 1.76 & 0.00339 & 45.39 & 1.64 \\
\hline
\end{tabular}
\end{subtable}

\vspace{0.3cm}

\begin{subtable}[t]{\textwidth}
\centering
\begin{tabular}{|c||>{\centering\arraybackslash}p{1cm}|>{\centering\arraybackslash}p{1cm}|>{\centering\arraybackslash}p{0.85cm}||>{\centering\arraybackslash}p{1cm}|>{\centering\arraybackslash}p{0.85cm}|>{\centering\arraybackslash}p{0.65cm}||>{\centering\arraybackslash}p{1cm}|>{\centering\arraybackslash}p{0.85cm}|>{\centering\arraybackslash}p{0.65cm}|}
\hline
\textbf{Method} & \multicolumn{3}{|c||}{$\mathrm{D}9$} & \multicolumn{3}{|c||}{$\mathrm{D10}$} & \multicolumn{3}{|c|}{$\mathrm{D11}$} \\
\cline{2-10}
& {\tt ReEr} & {\tt AbEr} & {\tt InEr} & {\tt ReEr} & {\tt AbEr} & {\tt InEr} & {\tt ReEr} & {\tt AbEr} & {\tt InEr} \\
\hline
\tt{FISTA} & 0.00532 & 70.76 & 2.16 & 0.00489 & 65.29 & 2.07 & 0.00496 & 66.05 & 2.06 \\
\hline
\tt{FPC-BB} & 0.00424 & 56.90 & 1.79 & 0.00403 & 54.04 & 1.65 & 0.00394 & 52.89 & 1.63 \\
\hline
\tt{TwIST} & 0.00074 & 10.32 & 0.42 & 0.00092 & 12.66 & 0.51 & 0.00088 & 12.26 & 0.47 \\
\hline
\tt{GSCG} & 0.00035 & 4.70 & 0.39 & 0.00037 & 4.98 & 0.28 & 0.00035 & 4.71 & 0.33 \\
\hline
\tt{ISGA} & 0.00256 & 34.34 & 1.33 & 0.00228 & 30.70 & 1.15 & 0.00189 & 25.59 & 1.06 \\
\hline
\end{tabular}
\end{subtable}

\vspace{0.3cm}

\begin{subtable}[t]{\textwidth}
\centering
\begin{tabular}{|c||>{\centering\arraybackslash}p{1cm}|>{\centering\arraybackslash}p{1cm}|>{\centering\arraybackslash}p{0.85cm}|}
\hline
\textbf{Method} & \multicolumn{3}{|c|}{$\mathrm{D12}$} \\
\cline{2-4}
& {\tt ReEr} & {\tt AbEr} & {\tt InEr}  \\
\hline
\tt{FISTA} & 0.00521 & 69.46 & 2.12 \\
\hline
\tt{FPC-BB} & 0.00380 & 51.07 & 1.70 \\
\hline
\tt{TwIST} & 0.00089 & 12.36 & 0.52 \\
\hline
\tt{GSCG} & 0.00036 & 4.79 & 0.30 \\
\hline
\tt{ISGA} & 0.00172 & 23.19 & 1.06 \\
\hline
\end{tabular}
\end{subtable}
\caption{Average values of relative error ({\tt ReEr}), absolute error ({\tt AbEr}), and infinity error ({\tt InEr}) over all evaluation images, Figure \ref{fig: Te-Im}, for initial dictionary ($\mathrm{D}0$) and all learned dictionaries ($\mathrm{D}1$-$\mathrm{D}12$).
\label{table:average-errors}}
\end{table}

The numbers of the table give a more statistical account of the behaviour observed in the graphs. 
Let us note that for these computations, images are considered as entities where the tonal range is given by standard grey value integers in $[0, \ldots,255]$.
This implies for instance, that an infinity norm error of one means that there is a difference of one grey level.

For {\tt FISTA} and {\tt ISGA}, the errors start very high but are reduced quickly and almost monotonically when increasing the dictionary. Interestingly, {\tt ISGA} gives in comparison to {\tt FISTA} consistently lower errors as more data is supplemented, while {\tt FISTA} seems to struggle in lowering errors in comparison to other methods. For the {\tt FPC-BB} method, errors are lower in comparison with {\tt FISTA} and errors are lowered almost monotonically. The more data is supplemented by dictionaries, {\tt ISGA} appears to be advantageous compared to {\tt FPC-BB}.

The methods {\tt TwIST} and {\tt GSCG} enable very low errors that stay approximately constant even when extending dictionaries, with some fluctuations after construction of second or third dictionary. 
While {\tt GSCG} shows in comparison a lower error.

We supplement this quantitative evaluation of reconstruction quality and dictionaries by an assessment of computation times to gain an understanding of overall computational efficiency of the methods, see Table \ref{tab:cpu_time}.
Since dictionary learning takes place over patches, we propose to evaluate over patches plus giving an account for the complete images.

We observe that {\tt ISGA} is by far the fastest method.
Comparing with {\tt FISTA} and {\tt FPC-BB} which result in similar reconstruction quality the difference is quite apparent in total.
As for {\tt TwIST} and {\tt GSCG}, the {\tt TwIST}
method appears to be faster, whereas we recall that {\tt GSCG} gave lower error measures.
Comparing then {\tt ISGA} with {\tt TwIST} and {\tt GSCG}, the {\tt ISGA} method is faster but there is relatively high difference in error measures. 
Which may be considered negligible when using {\tt ISGA} together with larger amounts of training data.

\begin{table}[h]
\centering
\caption{Average of CPU time in the image reconstruction  for a single patch and a single image without averaging process over all evaluation images, Figure \ref{fig: Te-Im}, and dictionaries ($\mathrm{D}0$-$\mathrm{D}12$) for each method (in seconds).}
\label{tab:cpu_time}
\begin{tabular}{|c||c||c|}
\hline
\textbf{Method} & \textbf{CPU(Sec) - Per single patch} & \textbf{CPU(Sec) - Per single image}\\
\hline
FISTA  & 0.515105 & 667.5761\\
\hline
FPC-BB & 0.042173 & 54.6562\\
\hline
TwIST  & 0.028969 & 37.5438\\
\hline
GSCG   & 0.097154 & 125.9116\\
\hline
ISGA   & 0.004019 & 5.2086\\

\hline
\end{tabular}
\end{table}

\section{Summary and Conclusion}

We have presented in this paper a comprehensive study of optimization methods based on iterative shrinkage for the purpose of online SDL. 
We considered these methods as they represent a class of state-of-the-art methods for the purpose.
As this is not a study we have encountered in a similar way within the literature, we have proposed some means of evaluation that may represent a novelty in the field of dictionary learning and SDL optimization.

We conjecture that the error measures together with the construction of the datasets and dictionaries gave some interesting and relevant insights about working with the sparse coding step in SDL and possible ways to apply the considered optimization methods. For example, this may depend on availability of useful training data, related to observation of different reconstruction quality for {\tt FISTA/FPC-BB/ISGA} in contrast to {\tt TwIST/GSCG}. 

For future work, it appears that it is desirable to consider varying parameters for the underlying basis pursuit denoising problem, and to study the varying degree of the actual sparsity that can be achieved by such methods. 
Let us also note that in the current paper we did not explore fully the online nature of the chosen SDL algorithm.

\section{Appendix}
In this part, we give more details about the equations (\ref{e:algeq1}) and \ref{e:algeq2}) in Algorithm \ref{alg:one} and Algorithm \ref{alg:two}, respectively.
First, we recall some properties of the trace and the matrix derivative. The derivative of a matrix is usually referred as the gradient, denoted as $\nabla$. 
\begin{proposition}\label{pro1}
Let $A$, $B$, and $C$ be matrices with proper dimension, then 
\begin{itemize}
    \item[P1.] $\text{Tr}(A)=\text{Tr}(A^T)$ 
    \vspace{2.mm}
    \item[P2.] $\text{Tr}(ABC)=\text{Tr}(BCA)=\text{Tr}(CAB)$ 
    \vspace{2.mm}
    \item[P3.]$\nabla\text{Tr}(A)=\text{Tr}(\nabla A)$ 
    \vspace{2.mm}
    \item[P4.] $\nabla_A\text{Tr}(AB)=B^T$
    \vspace{2.mm}
    \item[P5.] $\nabla_A\text{Tr}(ABA^TC)=CAB+C^TAB^T$
\end{itemize}
\end{proposition}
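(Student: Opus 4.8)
The plan is to verify each of the five identities P1--P5 directly from the definition of the trace, $\text{Tr}(M)=\sum_i M_{ii}$, together with the definition of the matrix gradient $(\nabla_A F)_{ij}=\partial F/\partial A_{ij}$. These are all standard facts, so the proof is a sequence of short index computations rather than anything requiring a clever idea; I would present them in the stated order, as the later identities build on the earlier ones.

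For P1, I would simply note that the diagonal entries of $A$ and $A^T$ coincide, $(A^T)_{ii}=A_{ii}$, so the two sums are equal. For P2 (cyclicity), I would write $\text{Tr}(ABC)=\sum_i(ABC)_{ii}=\sum_{i,j,k}A_{ij}B_{jk}C_{ki}$ and observe that this triple sum is invariant under cyclic relabelling of the summation indices, which is exactly $\text{Tr}(BCA)=\text{Tr}(CAB)$. For P3, I would point out that differentiation and the finite sum defining the trace commute, so $\nabla\text{Tr}(A)=\nabla\sum_i A_{ii}=\sum_i\nabla A_{ii}=\text{Tr}(\nabla A)$ entrywise, once one fixes the convention for what $\nabla A$ means (a tensor of partials with the trace taken over the matrix indices of $A$).

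For P4, I would expand $\text{Tr}(AB)=\sum_{i,j}A_{ij}B_{ji}$ and differentiate with respect to $A_{k\ell}$, picking out the single term with $i=k$, $j=\ell$, which gives $B_{\ell k}=(B^T)_{k\ell}$; hence $\nabla_A\text{Tr}(AB)=B^T$. For P5, I would write $\text{Tr}(ABA^TC)=\sum A_{ij}B_{jk}(A^T)_{k\ell}C_{\ell i}=\sum A_{ij}B_{jk}A_{\ell k}C_{\ell i}$ and differentiate with respect to $A_{pq}$. The variable $A$ now appears twice, so by the product rule there are two contributions: one from the first factor (set $i=p$, $j=q$), giving $\sum_{k,\ell}B_{qk}A_{\ell k}C_{\ell p}=(C^TAB^T)_{pq}$ after transposing, and one from the second factor (set $\ell=p$, $k=q$), giving $\sum_{i,j}A_{ij}B_{jq}C_{pi}=(CAB)_{pq}$; adding these yields $\nabla_A\text{Tr}(ABA^TC)=CAB+C^TAB^T$.

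The only mild subtlety — and the one place worth stating explicitly rather than ``grinding through'' — is bookkeeping in P5: one must be careful to treat the two occurrences of $A$ as independent when applying the product rule, and to transpose correctly when re-assembling each index sum into matrix-product form. Everything else is routine, and no step presents a genuine obstacle. I would also remark that P4 is the special case of P5 obtained by a suitable choice of factors, but since P4 is used on its own it is cleanest to record it separately.
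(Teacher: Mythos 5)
Your verification is correct: each of the index computations for P1--P5 checks out, including the two product-rule contributions in P5, which correctly reassemble into $(C^TAB^T)_{pq}$ and $(CAB)_{pq}$. Note that the paper itself states Proposition~\ref{pro1} without proof, treating these as standard trace and matrix-gradient identities (they are used only as tools in the Appendix to compute $\nabla_D F(D)$), so there is no authorial argument to compare against; your elementary entrywise derivation is a complete and appropriate way to justify them.
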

Turning to (\ref{e:algeq1}), we have minimization in terms of $D$. Rewriting the cost function 
\begin{equation*}
\begin{split}
 \frac{1}{2}\|Dx_i-p_i\|_2^2+\mu\|x_i\|_1&=\frac{1}{2}(Dx_i-p_i)^T(Dx_i-p_i)+\mu\|x_i\|_1\\&=\frac{1}{2}(x_i^TD^TDx_i-x_i^TDp_i-p_i^TDx_i+\|p_i\|^2)+\mu\|x_i\|_1   
\end{split}
\end{equation*}
and concerning the terms with $D$, we have
\begin{equation*}
D_k=\underset{D\in C}{\textrm{\argmin}}\frac{1}{k}\sum_{i=1}^{k} \frac{1}{2}\Big(x_i^TD^TDx_i-\big(x_i^TD^Tp_i+p_i^TDx_i\big)\Big)
\end{equation*}
Now, we show that  
\begin{itemize}
    \item [i.] $\text{Tr}(D^TDA_k)=\sum_{i=1}^kx_i^TD^TDx_i$
    \item[ii.] $\text{Tr}(D^TB_k)=\sum_{i=1}^k\frac{1}{2}\big(x_i^TD^Tp_i+p_i^TDx_i\big)$
\end{itemize}

\begin{proof}
By the definition $A_k=\sum_{i=1}^kx_ix_i^T$ and $B_k=\sum_{i=1}^kp_ix_i^T$. Using properties \textit{P1} and \textit{P2} of Proposition \ref{pro1} and the fact that the trace of the outer product of two real vectors is equivalent to the inner product of them, we get  
    \begin{itemize}
    \item[i.]\begin{equation*}
\text{Tr}(D^TDA_k)=\text{Tr}\big(\sum_{i=1}^k D^TDx_ix_i^T\big)
=\sum_{i=1}^k\text{Tr}\big( {\underbrace{Dx_i}_{v}}{\underbrace{x_i^TD^T}_{v^T}}\big)
=\sum_{i=1}^k{\underbrace{x_i^TD^TDx_i}_{v^Tv}}.
\end{equation*} 
        \item [ii.] \begin{equation*}
\text{Tr}(D^TB_k)=\text{Tr}\big(\sum_{i=1}^k D^Tp_ix_i^T\big)
=\sum_{i=1}^k\text{Tr}\big( {\underbrace{D^Tp_i}_{a}}{\underbrace{x_i^T}_{b^T}}\big)
=\sum_{i=1}^k{\underbrace{x_i^TD^Tp_i}_{a^Tb}}
\end{equation*}
also
\begin{equation*}
\text{Tr}(B_k^TD)=\text{Tr}(D^TB_k)
=\sum_{i=1}^kp_i^TDx_i
\end{equation*}
which together lead to the result.
    \end{itemize}
\end{proof}
Let $F(D)=\frac{1}{2}\text{Tr}(D^TDA_k)-\text{Tr}(D^TB_k)$ in \eqref{e:algeq1}. 
Using the properties $\textit{P3-P5}$ of Proposition \ref{pro1} and the fact that $A_k=A_k^T$, we have 
\begin{eqnarray*}
\nabla_DF(D)&=&\nabla_D\big(\frac{1}{2}\text{Tr}(D^TDA_k)\big)-\nabla_D\big(\text{Tr}(D^TB_k)\big)\nonumber\\
&=&\frac{1}{2}\Big(\nabla_D\big(\text{Tr}(D^TDA_kI_{n})\big)\Big)-\nabla_D\big(\text{Tr}(B_k^TD)\big)\\
&=&\frac{1}{2}\Big(\nabla_D\big(\text{Tr}(DA_kI_{n}D^T)\big)\Big)-\nabla_D\big(\text{Tr}(DB_k^T)\big)\\
&=& \frac{1}{2}\Big(\nabla_D\big(\text{Tr}(DA_kD^TI_{m})\big)\Big)-B_k\\
&=&\frac{1}{2}\big(I_mDA_k+I_m^TDA_k^T\big)-B_k\\
&=&DA_k-B_k
\end{eqnarray*}
Now, $\nabla_DF(D)=0$ leads to \eqref{e:algeq2} column-wise.
To make it simple and clear, let assume that the square matrix $A_k$ is invertible. So, 
\begin{equation}\label{e:Ain}
 \nabla_DF(D)=0\Rightarrow D=B_kA_k^{-1}   
\end{equation}
We can rewrite \eqref{e:Ain} as
\begin{equation}\label{e:Ain1}
D=B_kA_k^{-1}=B_kA_k^{-1}-D+D=(B_k-DA_k)A_k^{-1}+D  
\end{equation}
Now, considering \eqref{e:Ain1} column-wise for nonzero diagonal entries of $A_k^{-1}$ leads to \eqref{e:algeq2}. 
\vspace{3mm}
\\

\subsubsection*{Acknowledgment:} We acknowledge funding by the German Aerospace Center
(DLR) as part of project AIMS: Artificial Intelligence Meets Space (grant number 50WK2270F).
%

\end{document}